\listfiles
\UseRawInputEncoding
\documentclass[twocolumn,preprint]{elsarticle}
\usepackage{array}
\usepackage{lineno,hyperref}
\usepackage{amsmath,amsthm,amssymb,amsfonts}
\usepackage[ruled,vlined]{algorithm2e}
\usepackage{graphicx}
\usepackage{textcomp}
\usepackage{float}
\usepackage{diagbox}
\usepackage{caption}
\usepackage{subcaption}
\usepackage{tabularx}
\usepackage{algpseudocode}
\usepackage{xcolor}
\newtheorem{thm}{Theorem}
\newtheorem{prop}{Proposition}
\newtheorem{mydef}{Definition}

\graphicspath{ {images/} }
\journal{Knowledge-based Systems}









\bibliographystyle{elsarticle-num}

\begin{document}

\twocolumn[{
\begin{frontmatter}

\renewcommand{\thefootnote}{\fnsymbol{footnote}}

\title{MC-GEN: Multi-level Clustering for Private Synthetic Data Generation}

\author{Mingchen Li\footnotemark[1], Di Zhuang\footnotemark[2], and J. Morris Chang\\
{\tt\small mingchenli@usf.edu, zhuangdi1990@gmail.com, chang5@usf.edu}\\}




\begin{abstract}
With the development of machine learning and data science, data sharing is very common between companies and research institutes to avoid data scarcity. However, sharing original datasets that contain private information can cause privacy leakage. A reliable solution is to utilize private synthetic datasets which preserve statistical information from original datasets. In this paper, we propose MC-GEN, a privacy-preserving synthetic data generation method under differential privacy guarantee for machine learning classification tasks. MC-GEN applies multi-level clustering and differential private generative model to improve the utility of synthetic data. In the experimental evaluation, we evaluated the effects of parameters and the effectiveness of MC-GEN. The results showed that MC-GEN can achieve significant effectiveness under certain privacy guarantees on multiple classification tasks. Moreover, we compare MC-GEN with three existing methods. The results showed that MC-GEN outperforms other methods in terms of utility.
\end{abstract}

\begin{keyword}
\texttt{synthetic data generation, differential privacy, machine learning}
\end{keyword}
\end{frontmatter}
}]

\renewcommand{\thefootnote}{\fnsymbol{footnote}}
\footnotetext[1]{This work was done by Mingchen Li and Di Zhuang when they were at the University of South Florida. }
\footnotetext[2] {Di Zhuang is currently with Snap Inc., 2772 Donald Douglas Loop N, Santa
Monica, CA 90405.}

\section{Introduction}\label{sec:introduction}
Machine learning has become an important technology in many fields, such as medical diagnosis, fraud detection, and product analysis. As a data-driven approach, data is considered as the fuel of machine learning algorithm~\cite{talavera2022data}, and the performance of the machine learning model often depends on the amount of data. To ensure the performance of machine learning, data sharing happens very often among organizations with similar research interests. For instance, if the research institutes do not have sufficient data for an illness diagnostic system, hospitals can share their data (patient records) to make up for the data gap. However, such data usually carry private information that can cause privacy leakage. Hence, sharing data in a privacy-preserving way for machine learning is of vital importance.

Sharing synthetic datasets generated from original datasets is a common way to protect data privacy. However, the individual information can be easily inferred with some background knowledge. To prevent this issue, differential privacy (DP)\cite{dwork2008differential} has been widely used as a strong and provable privacy guarantee at the individual sample level. Synthetic data releasing under differential privacy emerges as a reliable solution for privacy-preserving data sharing in machine learning to protect individual records in the original datasets. It allows the data owner to publish synthetic datasets to data users without privacy concerns, and data users can make use of synthetic datasets for different purposes, such as machine learning, data mining, etc.

Designing a powerful privacy-preserving synthetic data generation method for machine learning purposes is of great challenge. First, a privacy-preserving method usually introduces perturbations on data samples that hurt the utility of data. Mitigating the perturbation to reach a certain level of utility is not easy. Second, machine learning has multiple tasks, like support vector machine, logistic regression, random forest, and k-nearest neighbor. An effective synthetic data generation method should be applicable to different tasks. Third, some data has a complex distribution. Forming an accurate generator based on the whole data distribution is hard.

In recent years, several works related to private synthetic data release have been proposed in the literature~\cite{hardt2012simple},~\cite{dwork2011differential},~\cite{taneja2008diffgen},~\cite{su2018privpfc},~\cite{bindschaedler2017plausible},~\cite{zhang2017privbayes},~\cite{chanyaswad2017coupling},~\cite{soria2017differentially}. One kind of method is to produce the synthetic database that preserved privacy and utility regarding query sets. These algorithm~\cite{hardt2012simple},~\cite{dwork2011differential} are usually designed to answer different query classes, and data samples in the dataset are not actually released. In~\cite{taneja2008diffgen},~\cite{su2018privpfc}, they proposed algorithms for synthetic data release based on noisy histograms, which are more focused on the categorical feature variables. Algorithm~\cite{bindschaedler2017plausible},~\cite{zhang2017privbayes},~\cite{soria2017differentially} generate synthetic dataset under statistical model with some preprocessing on the original datasets. However, they only consider generating the synthetic data based on the whole data distribution.

Our primary motivation is to allow data owners to share their datasets without privacy concerns by using synthetic datasets instead of original datasets. We proposed Multiple-level Clustering based GENerator for synthetic data (MC-GEN), an approach that uses multi-level clustering (sample level and feature level) and differentially private multivariate Gaussian generative model to release synthetic datasets which satisfy $\epsilon$-differential privacy. Extensive experiments on different original datasets have been conducted to evaluate MC-GEN and its parameters. The results demonstrate that synthetic datasets generated by MC-GEN maintain the utility of classification tasks while preserving privacy. Furthermore, we compared MC-GEN with other existing methods, and our results show that MC-GEN outperforms other existing methods in terms of effectiveness.

The main contributions of MC-GEN are summarized as follows:
\begin{itemize}
  \item We proposed and released an innovative, effective synthetic data generation method, MC-GEN, which allows the data owners to share synthetic datasets for multiple classification tasks without privacy concerns\footnote[1]{https://github.com/mingchenli/MCGEN-Private-Synthetic-Data-Generator}.
  \item We demonstrated that applying feature clustering upon sample level clustering engages less differentially private noise to achieve the same level of privacy compared to sample level clustering.
  \item We conducted extensive experiments to investigate the effects of the parameters and the effectiveness of MC-GEN on three classification datasets. Meanwhile, we compared MC-GEN with three existing methods to demonstrate its utility. 
\end{itemize}

The rest of the paper is organized as follows: Section \ref{sec:Related work} introduces some related works. Section \ref{sec:Preliminary} presents some preliminary of our approach. Section \ref{sec:Methodology} describes our methodology and generative model of MC-GEN. Section \ref{sec: Experimental Evaluation} presents the experimental evaluation of MC-GEN. Section \ref{sec:Conclusion} makes the conclusions.

\section{Related Work}
\label{sec:Related work}
Moritz et al. \cite{hardt2012simple} proposed an algorithm that combined the multiplicative weighs approach and exponential mechanism for differentially private database release. It finds and improves an approximation dataset to better reflect the original data distribution by using the multiplicative weighs update rule. The weight for each data record answer to desired queries in the approximation dataset will scale up or down depending on its contribution to the query result. The queries are sampled and measured by using the exponential mechanism and Laplace mechanism, which guarantee differential privacy. This work only considers the privacy solution to a set of linear queries.

Some of the research works applied the conditional probabilistic model to synthetic data generation. A new privacy notion, `` Plausible Deniability'' \cite{bindschaedler2017plausible}, has been proposed and achieved by applying a privacy test after generating the synthetic data. The generative model proposed in this paper is a probabilistic model which captures the joint distribution of features based on correlation-based feature selection (CFS)\cite{hall1999correlation}. The original data is transformed into synthetic data by using conditional probabilities in a probabilistic model. This work also proved that by adding some randomizing function to `` Plausible Deniability'' can guarantee differential privacy. The synthetic data generated by this approach contains a portion of the original data, which may lead to a potential privacy issue. PrivBayes \cite{zhang2017privbayes} generated the synthetic data by releasing a private Bayesian network. Starting from a randomly selected feature node, it extends the network iteratively by selecting a new feature node from the parent set using the exponential mechanism. They also applied the Laplace mechanism on the conditional probability to achieve the private Bayesian network. Bayesian network is a good approach for discrete data. However, using the encoding method to represent the dataset containing many numerical data introduced more noise to synthetic datasets.

Some works form a generative model on preprocessed original data. A non-interactive private data release method has been proposed in \cite{chanyaswad2017coupling}. They projected the data into a lower dimension and proved it is nearly Gaussian distribution. After projecting the original data, the Gaussian mixture model (GMM) is used to model the original data based on the estimated statistical information. The synthetic data is generated by adding differential privacy noise on GMM parameters. However, there is some information loss while projection the data into a lower dimension. Josep Domingo-Ferrer et al. \cite{soria2017differentially} proposed the release of a differentially private dataset based on microaggregation\cite{domingo2005ordinal}, which reduced the noise required by differential privacy based on k$-$anonymity~\cite{sweeney2002k}. They clustered original data into clusters, and records in each cluster will be substituted by a representative record (mean) computed by each cluster. The synthetic data is generated by applying the Laplace mechanism to representative records. They also mentioned the idea to consider the feature relationship while clustering, but the paper does not come up with a detailed methodology. Using the representative records as seed data to generate synthetic data may lose some variance in the original datasets, which makes the synthetic data not accurate.

DPGAN~\cite{https://doi.org/10.48550/arxiv.1802.06739}, BGAN~\cite{9522203} and PATE-GAN~\cite{yoon2018pategan} proposed differential private synthetic data release method based on generative adversarial network (GAN)~\cite{goodfellow2014generative}. Differentially private generative adversarial network (DPGAN) add noise on the gradient and clip the weight during the training process based on Wasserstein GAN to guarantee the privacy. PATE-GAN proposed private aggregation of teacher (PATE) ensembles teacher-student framework to generate synthetic data. It initialized k teacher models and k data subsets, and each teacher is trained to discriminate between the original and fake data using the corresponding subset. The student model is trained on the data label by ensemble results with the noise of teacher models. Then, the data generator is trained to fool the student model to get the synthetic data. However, using deep learning structures would lead to high demand for original data and a time-consuming training process. A recent benchmarking of differential private synthetic data generation~\cite{abs-2112-09238} shows that the GAN-based methods are not good at preserving the critical statistical information from original data distributions. This paper focuses on non-image data and how to preserve the statistical characteristic. Thus, the GAN-based mechanisms are not considered in the comparison.  

\section{Preliminary}
\label{sec:Preliminary}
This section introduces the background knowledge of this paper.

\subsection{Differential Privacy}
Differential privacy proposed by D.work et al.~\cite{dwork2011differential} is one of the most popular privacy definitions which guarantee strong privacy protection on individual samples. The formal definition of $\epsilon$- differential privacy for synthetic datasets generation is defined as below:

\begin{mydef} (Differential Privacy)

A randomized mechanism $M$ provides $\epsilon$ - differential privacy, if for any pair of datasets $D_{1}$, $D_{2}$ that differ in a single record (neighboring datasets) and for any possible synthetic data outputs S $\subset$ Range(M), it satisfies

\begin{equation}\label{DP}
 \frac{Pr(M(D_{1}) =  S)}{Pr(M(D_{2}) = S)} \leq exp(\epsilon).
\end{equation}

\end{mydef}

The privacy parameter (privacy budget) $\epsilon$ is used to present the privacy level achieved by the randomized mechanisms $M$. The privacy budget ranges from 0 to $\infty$. If $\epsilon$ = 0, the probability of outputting synthetic dataset $S$ for $D_{1}$ and $D_{2}$ are exactly the same. It ensures the perfect privacy guarantee, which means $D_{1}$, $D_{2}$ can not be distinguished by observing the synthetic datasets; if $\epsilon$ = $\infty$, the synthetic datasets generated from $D_{1}$ and $D_{2}$ are always look different (like original datasets), there is no privacy guarantee at all. If differential privacy has been applied to synthetic datasets, it is hard to distinguish whether a specific individual is in the original dataset by observing synthetic datasets. In other words, differential privacy makes the synthetic dataset plausible. For example, there is a patient record dataset and a new patient record (privacy information) has been added recently. After publishing the differential private synthetic dataset, the data users can not infer the information of the new patient. It is because the original dataset with ($D_{1}$) or without ($D_{2}$) the new patient are most likely to generate the synthetic dataset($S$).

In general, a differential privacy mechanism can be achieved by adding noise to the output (synthetic dataset) with a privacy parameter between 0 and 1. The mechanism used to achieve $\epsilon$ - differential privacy is called differentially private sanitizer and it is always associated with the sensitivity, also known as $L_{1}$ - Sensitivity, defined as follows:

\begin{mydef} ($L_{1}$ - Sensitivity)
For a function $f: D^{n} \to \mathbb{R}^{d}$ which maps datasets to real number domain, the sensitivity of the function f for all neighboring datasets pairs $D$, $D'$ is defined as follows:

\begin{equation}\label{L1sensitivity}
\Delta D =\max_{D,D'}||f(D)-f(D')||.
\end{equation}
\end{mydef}

Laplace mechanism achieves $\epsilon$-differential privacy by adding noise drawn from Laplace distribution~\cite{dwork2006calibrating} to the output.

\begin{mydef} (Laplace Mechanism)
For a function $f: D^{n*d} \to \mathbb{R}^{n*d}$ which maps datasets to real number domain, the mechanism M defined in the following equation provides $\epsilon$-differential privacy:

\begin{equation}\label{Laplace Mechanism}
  M(D) = f(D) + Laplace(\Delta f/ \epsilon).
\end{equation}

\end{mydef}

\subsection{Agglomerative Hierarchical Clustering}An essential part of our methodology is hierarchical clustering~\cite{johnson1967hierarchical}. Hierarchical clustering is an algorithm that clusters input samples into different clusters based on the proximity matrix of samples. The proximity matrix contains the distance between each cluster. Agglomerative hierarchical clustering is a bottom-up approach. It starts by assigning each data sample to its own group and merges the pairs of clusters that have the smallest distance to move up until there is only a single cluster left.

\subsection{Microaggregation}
Microaggregation~\cite{domingo2005ordinal} is a kind of dataset anonymization algorithm that can achieve k-anonymity. There are serval settings for microaggregation, notice, the microaggregation mentioned here is a simple heuristic method called maximum distance to average record (MDAV) proposed by Domingo-Ferrer et al.~\cite{domingo2005ordinal}. MDAV clusters samples into clusters, in which each cluster contains exactly k records, except the last one. Records in the same cluster are supposed to be as similar as possible in terms of distance. Furthermore, each record in the cluster will be replaced by a representative record of the cluster to complete the data anonymization.

\subsection{Multivariate Gaussian Generative Model}
The multivariate Gaussian generative model keeps the multivariate Gaussian distribution~\cite{ahrendt2005multivariate}, which is parameterized by the mean $\mu$ and covariance matrix $\Sigma$. Formally, the density function of multivariate Gaussian distribution is given by:
	
	\begin{equation}\label{MGD}
	f(x)= \frac{1}{(2\pi)^{n/2}|\Sigma|^{1/2}} exp(-\frac{1}{2}(x-\mu)^{T}\Sigma^{-1}(x-\mu)).
	\end{equation}
	
Data samples drawn from the Multivariate Gaussian generative model are under Multivariate Gaussian distribution.

\section{Methodology}
\label{sec:Methodology}
In this section, we present the methodology of our approach.
	
\subsection{Problem Statement}
Given a numerical dataset $D^{n \times d}$ ($n$ samples and $d$ features) which contains sensitive privacy records. The data owner expects to share dataset $D$ to an untrust party in a secure way. In this paper, we aim to use the synthetic dataset $D'$ to substitute the original dataset $D$ in data sharing to prevent privacy leakage. The synthetic dataset $D'$ not only maintains some certain information in original datasets but also protected by a certain level of privacy. 

\begin{figure}[h]
	\centering
	\includegraphics[scale=0.4]{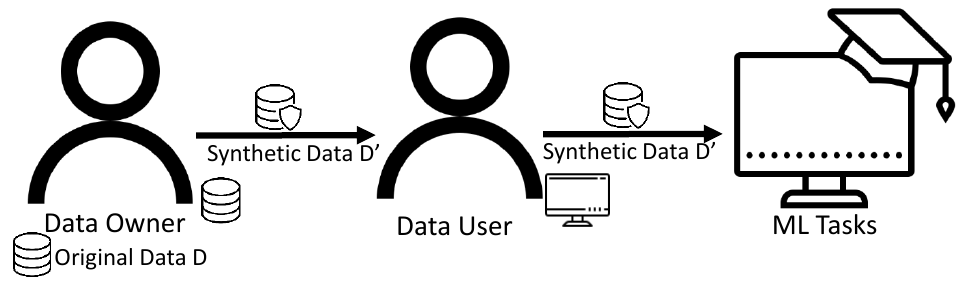}
	\caption{Synthetic Data Use Case}\label{usecase}
\end{figure}

\subsection{Methodology Overview}
Fig. \ref{flowchart} illustrates the design of our approach. It includes four processes worked collectively to generate the synthetic dataset which satisfies differential privacy:
\begin{itemize}
\item Data preprocessing: Combining independent feature sets and microaggregation~\cite{domingo2005ordinal} (multi-level clustering) to produce data clusters.

\item Statistic extraction: Extract the representative statistical information from each data cluster.
\item Differential privacy sanitizer: Introduce differential private noise on extracted statistical information.
\item Data generator: Generate synthetic data sample by sample from the noisy generative model.
\end{itemize}

\begin{figure*}[h]
	\centering
	\includegraphics[scale=0.6]{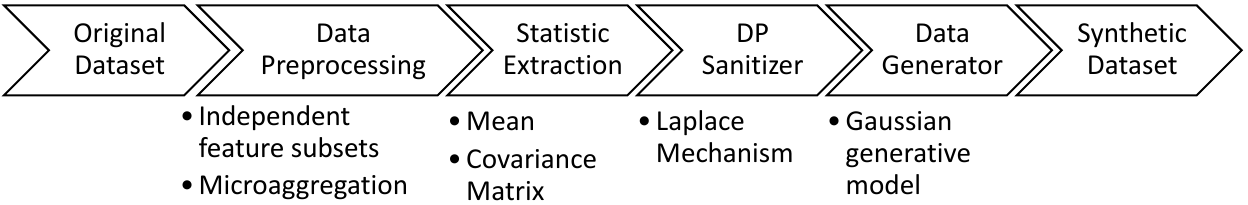}
	\caption{MC-GEN Algorithm}\label{flowchart}
\end{figure*}

We will explain each component in our design in the following sections.

\subsection{Data Preprocessing}
The conventional way to generate data clusters only groups the data at the sample level. For example, microaggregation (MDAV)~\cite{domingo2005ordinal} cluster the data in full feature dimension and add the differentially private noise on the representative records. Two kinds of errors may be introduced to jeopardize the utility of this process:

\begin{itemize}
  \item The false feature correlation introduced by sample level clustering. When modeling the output clusters from sample level clustering, some clusters may carry some correlation which not exist while looking into all data samples. This false feature correlation may apply unnecessary constraints when modeling the data clusters, which may lead the synthetic data into a different shape. \\
  \item  The noise variance introduced by DP mechanism. Intuitively, the less noise introduced from differential privacy results in higher utility. Hence, reducing the noise from DP mechanism also helps us to improve the data utility.
\end{itemize}

To smooth these two errors, we design multi-level clustering that combines independent feature sets (IFS) with microaggregation in our approach consecutively. Namely, we not only cluster the data at the sample level but also the feature level. Feature level clustering helps the generative model to capture the correct correlation of features. Compared to sample level clustering, using multi-level clustering also reduced the total noise variance introduced to the synthetic datasets, detailed proof shown in section \ref{Statistic Extraction and Differentially Private Sanitizer}. 
	
\subsubsection{Feature Level Clustering}

Given a numerical dataset $D^{n \times d}$, we divided data features into m independent feature sets using agglomerative hierarchical clustering. A distance function that converts Pearson correlation to distance has been designed to form the proximity matrix in hierarchical clustering. Features that have a higher correlation should have a lower distance and a lower correlation corresponding to the higher distance. This approach results in that features in the same set are more correlated to each other and less correlated to the features in other feature sets. However, hierarchical clustering needs to specify the number of clusters to be divided. We use Davies-Bouldin~\cite{davies1979cluster} index to choose the best split from all possible numbers of clusters. The distance function in the proximity matrix is shown below:

\begin{equation}\label{eq:corrtodis}
  d = 2n[1-Corr(X,Y)],
\end{equation}
where Corr(X, Y) is the Pearson correlation between two random variables, e.g., feature pair in original datasets.

\subsubsection{Sample Level Clustering}
Based on the output of feature level clustering, we applied microaggregation~\cite{domingo2005ordinal} on each independent feature set (IFS). The purpose of microaggregation is to assign the homogeneous samples to the same cluster, which preserves more information from the original data. On the other hand, referring to~\cite{domingo2005ordinal}, sensitivity on each sample cluster can be potentially reduced compared to the global sensitivity. This reduction will result in involving less noise in the differential privacy mechanism. In other words, it enhances the data utility under the same level of privacy guarantee.

\begin{figure}[H]
	\centering
	\includegraphics[scale=0.7]{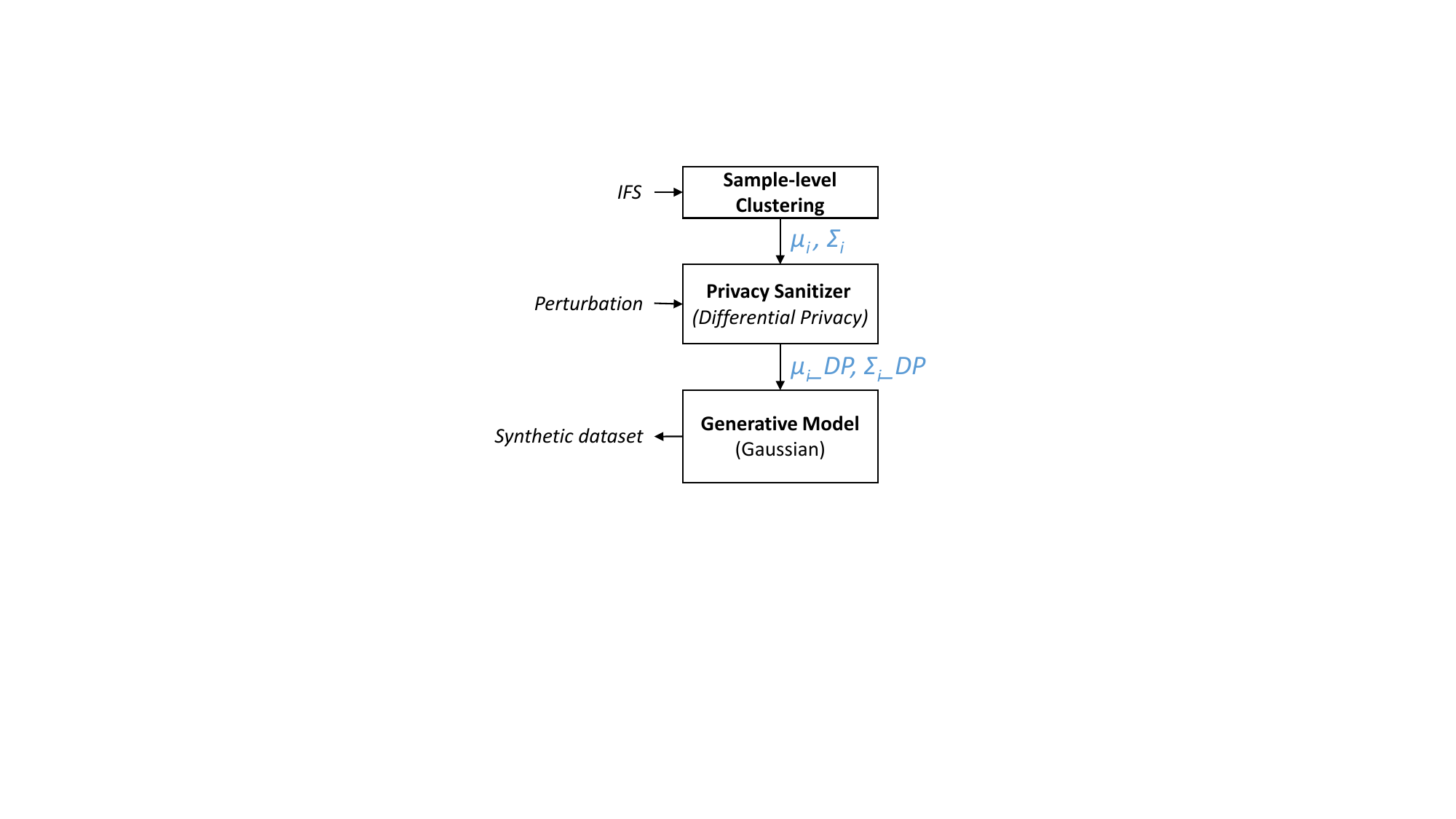}
	\caption{Synthetic Data Generator}\label{syngenerator}
\end{figure}

\subsection{Statistic Extraction and Privacy Sanitizer}\label{Statistic Extraction and Differentially Private Sanitizer}

\subsubsection{Statistic Extraction}
For a given dataset $D$, features are divided into $m$ IFSs on the feature level. In each IFS, the data with selected features is clustered into $j$ clusters by microaggregation on the sample level. The combination of feature and sample level clustering is called multi-level clustering. It outputs $m \times j$ clusters in which each cluster has at least k records. The statistical information is extracted from each cluster to generate the synthetic data.

\subsubsection{Privacy Sanitizer}
Assuming each cluster forms a multivariate Gaussian distribution, the mean ($\mu_{i}$) and covariance matrix ($\Sigma_{i}$) are computed for each cluster $c_{i}$. To ensure differential privacy, the generative model is built based on $\mu_{i}\_DP$ and $\Sigma_{i}\_DP$ which are achieved by the privacy sanitizer, as shown in Fig. \ref{syngenerator}. The privacy sanitizer adds differentially private noise on $\mu_{i}$ and $\Sigma_{i}$.  Algorithm \ref{alg:MC-GEN} illustrates the process of privacy sanitizer. $n_{mean}$ and $n_{covarM}$ denote the noise perturbation on mean and covariance matrix.

\begin{algorithm}[h]
\caption{MC-GEN privacy sanitizer}\label{alg:MC-GEN}
\LinesNumbered
\KwIn{$\mu_{i},\Sigma_{i}$}
\KwOut{$\mu_{i}\_DP,\Sigma_{i}\_DP$}

\For {each cluster:}{
    $\mu_{i}, \Sigma_{i} \gets c_{i}$\;
    $n_{mean} \gets $Draw $d$ samples from $Lap(0,\frac{\Delta}{\lvert c_{i} \rvert\epsilon_{m}})$\;
    $n_{covarM} \gets $Draw $\frac{d^{2}+d}{2}$ samples from $Lap(0,\frac{\Delta}{\lvert c_{i} \rvert\epsilon_{m}})$\;
    $\mu_{i}\_DP = \mu_{i} + n_{mean} $ \;
    $ \Sigma_{i}\_DP = \Sigma_{i} + n_{covarM} $\;
    \Return $\mu_{i}\_DP,\Sigma_{i}\_DP$
}

\end{algorithm}

%

The differentially private noise added on $\mu_{i}$ and $\Sigma_{i}$ suppose to be as little as possible to preserve the utility of synthetic data. It refers to the generative model captures the statistical information more precisely. Thus, it is very critical to investigate and control the noise variance introduced by privacy sanitizer. A contribution of MC-GEN is applying multi-level clustering reduces the overall noise introduced from differential privacy mechanism compared to sample level clustering. The following proof demonstrates the noise variance of multi-level clustering.
\begin{thm}
The noise variance introduced by multi-level clustering on mean vector $\mu$ is $\sum\limits_{m=1}^{m}\sum\limits_{j=1}^{j}\frac{\Delta IFS_{m}}{\lvert  C_{j}^{m} \rvert \epsilon}d$. 
\end{thm}

\begin{proof}

If multi-level clustering is applied, dataset D has been vertically partitioned into $m$ IFSs with corresponding data, and data in each IFS has been clustered into $j$ clusters by microaggregation. 

Let $\Delta IFS_{m}$ denotes the $L_{1}$ - sensitivity of $m_{th}$ IFS, $C_{j}^{m}$ denotes the $j_{th}$ cluster in $IFS_m$, $d_{m}$ denotes the size of $IFS_m$, and $d$ denotes the total number of features. The noise variance $N_{IFS_{m}}$ of $m_{th}$ IFS is the sum of noise variances on each cluster in this IFS. Noise on a single cluster by microaggragation has been shown in ~\cite{soria2017differentially}. Thus, noise variance $N_{IFS_{m}}$ can be written as:

\begin{equation}\label{mean_IFS_detail}
  N_{IFS_{m}} =\sum\limits_{j=1}^{j}\frac{\Delta IFS_{m}}{\lvert  C_{j}^{m} \rvert \epsilon_{m}}d_{m},
\end{equation}

\noindent where the $\epsilon$ has been divided proportionally based on the size of each IFS:

\begin{equation}
\left\{
\begin{aligned}\label{epsilon_description}
  \epsilon_{m} &=  \frac{d_{m}}{d}\epsilon \\
  \epsilon &= \epsilon_{1}+ \epsilon_{2}+ ... + \epsilon_{m}.
\end{aligned}
\right.
\end{equation}

The noise on each IFS is independent, and the total noise variance on the mean vector by multi-level clustering $N_{1}$ is the sum of noise on all IFSs.

\begin{equation}\label{mean_IFS_Sum}
  N_{1} = \sum\limits_{m=1}^{m}N_{IFS_{m}}=N_{IFS_{1}} + N_{IFS_{2}} + ... + N_{IFS_{m}}\\
\end{equation}

Based on equation (\ref{mean_IFS_detail}) and equation (\ref{epsilon_description}), use addition commutative to rewrite equation (\ref{mean_IFS_Sum}) as following:

\begin{equation}\label{mean_IFS_Sum_addcom}
 N_{1} = \sum\limits_{m=1}^{m}\sum\limits_{j=1}^{j}\frac{\Delta IFS_{m}}{\lvert  C_{j}^{m} \rvert \epsilon}d\\.
\end{equation}
 
\end{proof}

\begin{thm}
The noise variance introduced by multi-level clustering on covariance matrix $\Sigma$ is $\sum\limits_{j=1}^{j}\sum\limits_{m=1}^{m}\frac{\Delta IFS_{m}}{\lvert  C_{j}^{m} \rvert \epsilon}d_{m}^{2}$.
\end{thm}

\begin{proof}

 \noindent Use the same notation in Theorem 1 to describe the noise variance on the covariance matrix:
 
\begin{equation}\label{covar_IFS_detail}
  N_{IFS_{m}} =\sum\limits_{j=1}^{j}\frac{\Delta IFS_{m}}{\lvert  C_{j}^{m} \rvert \epsilon_{m}}d_{m}^{2}.
\end{equation}

The noise on each IFS is independent, and the total noise variance on the mean vector by multi-level clustering $N_{2}$ is the sum of noise on all IFSs.

\begin{equation}\label{covar_IFS_Sum}
  N_{2} = \sum\limits_{m=1}^{m}N_{IFS_{m}}=N_{IFS_{1}} + N_{IFS_{2}} + ... + N_{IFS_{m}}\\
\end{equation}

Based on equation (\ref{epsilon_description}) and (\ref{covar_IFS_detail}), rewrite equation (\ref{covar_IFS_Sum}):
\begin{equation}\label{covar_IFS_Sum_addcom}
  N_{2} = \sum\limits_{m=1}^{m}\sum\limits_{j=1}^{j}\frac{\Delta IFS_{m}}{\lvert  C_{j}^{m} \rvert \epsilon}d_{m}^{2}\\.
\end{equation}

\end{proof}

The following proofs demonstrates the noise variance of sample level clustering. 
 
\begin{thm}
The noise variance introduced by sample level clustering on mean vector $\mu$ is $\sum\limits_{j=1}^{j}\frac{\Delta D}{\lvert  C_{j} \rvert \epsilon}d$.
\end{thm}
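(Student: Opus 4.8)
The plan is to reproduce the derivation behind Theorem 1, specialized to the degenerate case in which no feature-level clustering is applied, so that all $d$ features are clustered together as a single feature block. In this setting the independent feature set decomposition collapses to $m=1$: there is one feature set of size $d$, its sensitivity is the global $L_{1}$-sensitivity $\Delta D$ of the whole dataset rather than a per-set quantity $\Delta IFS_{m}$, and the entire privacy budget $\epsilon$ is spent on the mean vector without the partition across feature sets prescribed in equation (\ref{epsilon_description}).

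First I would appeal to the privacy sanitizer of Algorithm \ref{alg:MC-GEN}. Sample-level clustering via microaggregation partitions the $n$ records into $j$ clusters $C_{1},\dots,C_{j}$, each holding at least $k$ records. For a cluster $C_{j}$ the mean $\mu$ is a $d$-dimensional vector, and each of its $d$ coordinates is perturbed independently by Laplace noise of scale $\Delta D/(\lvert C_{j} \rvert \epsilon)$, which is exactly line~3 of the algorithm with $\Delta = \Delta D$ and $\epsilon_{m} = \epsilon$. Hence the noise scale on a single coordinate of the mean in cluster $C_{j}$ is $\Delta D/(\lvert C_{j} \rvert \epsilon)$.

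Next I would aggregate the contributions using the same bookkeeping that produces $N_{IFS}$ in the proof of Theorem 1. Summing the per-coordinate scale over the $d$ coordinates of the mean contributes a factor $d$, and summing over the $j$ sample clusters gives
\begin{equation}
  N = \sum\limits_{j=1}^{j}\frac{\Delta D}{\lvert C_{j} \rvert \epsilon}\,d,
\end{equation}
which is the claimed expression.

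I do not anticipate a genuine obstacle: the statement is precisely the single-feature-set specialization ($m=1$) of Theorem 1, so the calculation is a direct application of the Laplace mechanism together with the cluster decomposition. The only point needing care is the substitution $\Delta IFS_{m} \to \Delta D$ and $\epsilon_{m} \to \epsilon$, reflecting that sample-level clustering alone must use the global sensitivity and the undivided budget. The real purpose of the result is comparative: placing it beside Theorem 1 makes the noise reduction of MC-GEN explicit, since each independent feature set satisfies $\Delta IFS_{m} \le \Delta D$ while the per-set budgets still sum to $\epsilon$.
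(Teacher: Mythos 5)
Your proposal is correct and matches the paper's argument in substance: the paper's proof of this theorem likewise takes the per-coordinate Laplace scale $\Delta D/(\lvert C_{j}\rvert\epsilon)$ on each cluster mean (citing the microaggregation-based scheme of Domingo-Ferrer et al.), multiplies by the $d$ coordinates, and sums over the $j$ clusters. Your framing of it as the $m=1$ specialization of Theorem 1, with $\Delta IFS_{m}\to\Delta D$ and $\epsilon_{m}\to\epsilon$, is just a cleaner way of stating the same one-line computation.
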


\begin{proof}

If sample level clustering is applied, dataset D has been clustered into $j$ clusters. Let $\Delta D$ denote the $L_{1}$ - sensitivity of dataset D, and $C_{j}$ denote the $j_{th}$ cluster. Refer to~\cite{soria2017differentially}, the total noise variance on the mean vector by sample level clustering $N_{3}$ can be written as follows:

\begin{equation}\label{mean_noIFS}
  N_{3}=\frac{\Delta D}{\lvert  C_{1} \rvert \epsilon}d +  \frac{\Delta D}{\lvert  C_{2} \rvert \epsilon}d + ... + \frac{\Delta D}{\lvert  C_{j} \rvert \epsilon}d =\sum\limits_{j=1}^{j}\frac{\Delta D}{\lvert  C_{j} \rvert \epsilon}d.
\end{equation}

\end{proof}

\begin{thm}
The noise variance introduced by sample level clustering on covariance matrix $\Sigma$ is $\sum\limits_{j=1}^{j}\frac{\Delta D}{\lvert  C_{j} \rvert \epsilon}d^{2}$.
\end{thm}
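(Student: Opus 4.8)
The plan is to mirror, almost verbatim, the accounting argument used for the mean vector in the sample-level case (the immediately preceding theorem, equation \ref{mean_noIFS}), changing only the dimensionality of the object that the sanitizer perturbs. First I would fix the setting: when independent feature sets are \emph{not} applied, microaggregation partitions the entire dataset $D$ directly in the full $d$-dimensional feature space into the $j$ clusters $C_1,\dots,C_j$, and a single global quantity $\Delta D$ (the $L_1$-sensitivity of $D$) governs every cluster, in place of the per-IFS sensitivities $\Delta IFS_m$ of the multi-level construction. Because there is no feature-level split, no double summation over $m$ and no equal-cluster-size substitution are needed, so the bookkeeping is strictly simpler than in the IFS theorems.

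The main step is to count the perturbed entries. For a fixed cluster $C_j$, the sanitizer now acts on the covariance matrix $\Sigma$, which carries $d\times d = d^2$ scalar entries, as opposed to the mean vector's $d$ entries. By Algorithm \ref{alg:MC-GEN} together with the Laplace mechanism, each scalar entry receives independent noise whose scale depends only on the cluster size and the budget, namely $\frac{\Delta D}{\lvert C_j \rvert \epsilon}$; crucially this scale is identical across all entries of a given cluster. Summing the scale over the $d^2$ entries therefore yields the per-cluster contribution $\frac{\Delta D}{\lvert C_j \rvert \epsilon}d^2$, which is exactly the sample-level mean expression with the factor $d$ promoted to $d^2$. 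Aggregating the disjoint clusters then gives
\begin{equation}
N_{noIFS} = \sum_{j=1}^{j}\frac{\Delta D}{\lvert C_j \rvert \epsilon}d^2,
\end{equation}
matching the claimed expression and completing the argument.

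I do not expect a substantive obstacle, since the statement is a direct dimensional analog of the sample-level mean result rather than a new phenomenon. The single point that needs care is the entry count for $\Sigma$: Algorithm \ref{alg:MC-GEN} actually draws $\frac{d^2+d}{2}$ Laplace samples by exploiting the symmetry of the covariance matrix, whereas the theorem counts all $d^2$ positions. I would reconcile this either by treating $d^2$ as the (slightly loose) upper bound on the number of sanitized entries, or by adopting the explicit convention that every matrix position is counted. Either way, the essential requirement is consistency with the covariance-with-IFS theorem, which already uses $d_m^2$ under the same convention, so verifying that this modeling choice is applied uniformly is the only thing worth double-checking.
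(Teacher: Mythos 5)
Your proposal matches the paper's own argument: the paper's proof of this theorem is exactly the same per-cluster accounting, writing $N_{noIFS}=\sum_{j=1}^{j}\frac{\Delta D}{\lvert C_{j}\rvert\epsilon}d^{2}$ as the sum of a Laplace scale $\frac{\Delta D}{\lvert C_{j}\rvert\epsilon}$ over the $d^{2}$ covariance entries of each of the $j$ clusters, in direct analogy with the sample-level mean case. Your added remark reconciling the $d^{2}$ count with the $\frac{d^{2}+d}{2}$ draws in Algorithm \ref{alg:MC-GEN} is a point the paper silently glosses over, and your resolution (a uniform entry-counting convention consistent with the $d_{m}^{2}$ used in the IFS covariance theorem) is the right one.
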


\begin{proof}

Using the same notation in Theorem 3, the total noise variance on the covariance matrix by sample level clustering $N_{4}$ can be written as follows:
\begin{equation}\label{covar_noIFS}
  N_{4} =\frac{\Delta D}{\lvert  C_{1} \rvert \epsilon}d^{2} +  \frac{\Delta D}{\lvert  C_{2} \rvert \epsilon}d^{2} + ... + \frac{\Delta D}{\lvert  C_{j} \rvert \epsilon}d^{2} =\sum\limits_{j=1}^{j}\frac{\Delta D}{\lvert  C_{j} \rvert \epsilon}d^{2}. 
\end{equation}

\end{proof}

\begin{prop}
The overall noise introduced by multi-level clustering $N_{MC}$ is less than the overall noise introduced by sample clustering $N_{SC}$.
\end{prop}

\begin{proof}
Since the differentially private noise is applied to the mean and covariance matrix respectively. The overall noise introduced on multi-level clustering can be written as:
\begin{equation}
     N_{MC} = N_{1}+ N_{2}.
\end{equation}
The overall noise introduced on sample level clustering can be written as:
\begin{equation}
     N_{SC} = N_{3}+ N_{4}.
\end{equation}

Since data in each IFS has been clustered by microaggregation with the same cluster size, which means:
\begin{equation}\label{equalsizeproof}
  C_{j}^{m} =  C_{j}^{m+1}.
\end{equation}

and each cluster j is a subset of the original dataset:
\begin{equation}\label{clusterSubsetproof}
  d^{m} < d.
\end{equation}

Equation (\ref{mean_IFS_Sum_addcom}) can be rewritten as:
\begin{equation}\label{mean_compare}
 \begin{aligned}
 N_{1}&=\sum\limits_{m=1}^{m}\sum\limits_{j=1}^{j}\frac{\Delta IFS_{m}}{\lvert  C_{j}^{m} \rvert \epsilon}d\\
        &= \sum\limits_{j=1}^{j} \frac{\Delta IFS_{1}+\Delta IFS_{2}+...+\Delta IFS_{m}}{\lvert  C_{j} \rvert \epsilon}d\\
        &= \sum\limits_{j=1}^{j}\frac{\Delta D}{\lvert  C_{j} \rvert \epsilon}d = N_{3}.
 \end{aligned}
\end{equation}

Equation (\ref{covar_IFS_Sum_addcom}) can be rewritten as:
\begin{equation}\label{cov_compare}
\begin{aligned}
 N_{2}&= \sum\limits_{m=1}^{m}\sum\limits_{j=1}^{j}\frac{\Delta IFS_{m}}{\lvert  C_{j}^{m} \rvert \epsilon}d_{m}^{2}\\
        &= \sum\limits_{j=1}^{j} \frac{d^{2}_{1}\Delta IFS_{1}...+d^{2}_{m}\Delta IFS_{m}}{\lvert  C_{j} \rvert \epsilon}\\
        &< \sum\limits_{j=1}^{j}\frac{\Delta IFS_{1}+\Delta IFS_{2}+...+\Delta IFS_{m}}{\lvert  C_{j} \rvert \epsilon}d^{2} \\
        &=\sum\limits_{j=1}^{j}\frac{\Delta D}{\lvert  C_{j} \rvert \epsilon}d^{2} = N_{4}.
\end{aligned}
\end{equation}

Based on Eq.~(\ref{mean_compare}) and Eq.~(\ref{cov_compare}), we can know that $N_{MC}< N_{SC}$.
\end{proof}

Using feature level clustering not only mitigates the false feature correlation error but also helps to reduce the noise variance. Generally, the synthetic datasets generated with less noise will have better utility.

\subsection{Synthetic Data Generator}
The original multivariate Gaussian model is parameterized by mean ($\mu_{i}$) and covariance matrix ($\Sigma_{i}$). The algorithm \ref{alg:MC-GEN} outputs two parameters $\mu_{i}\_DP$ and $\Sigma_{i}\_DP$ that are protected by differential privacy. Hence the multivariate Gaussian model that is parameterized by $\mu_{i}\_DP$ and $\Sigma_{i}\_DP$ is also protected by differential privacy. Depending on the post-processing invariance of different privacy, all the synthetic data derived from differential private multivariate Gaussian models are protected by differential privacy as well. The synthetic data is synthesized sample by sample from the private multivariate Gaussian generative models with perturbed mean ($\mu_{i}\_DP$) and covariance matrix ($\Sigma_{i}\_DP$) based on Eq.~(\ref{MGD}). These multivariate Gaussian generative models are determined by the multiple-level clustering result of corresponding original data.

\section{Experimental Evaluation}
\label{sec: Experimental Evaluation}	
In this section, we show the experimental design and discuss the results of our approach.

\begin{table*}[t!]
\begin{center}
\begin{tabularx}{0.9\textwidth} { |>{\centering\arraybackslash}X || >{\centering\arraybackslash}X 
  | >{\centering\arraybackslash}X 
  | >{\centering\arraybackslash}X | }
 \hline
 \diagbox{Dataset}{Task}& SVM & Logistic Regression & Gradient Boosting\\
 \hline
 \multicolumn{4}{|c|}{Scenario 1} \\
 \hline
 Diabetes~\cite{nr} & 0.80 & 0.78 & 0.75\\
 \hline
 Adult~\cite{misc_adult_2} & 0.84 & 0.89 &0.91\\
 \hline
 Phishing~\cite{mohammad2012assessment} & 0.96 & 0.98 & 0.96\\
 \hline
 \multicolumn{4}{|c|}{Scenario 2} \\
 \hline
 Diabetes~\cite{nr} & 0.82 & 0.80 & 0.85\\
 \hline
 Adult~\cite{misc_adult_2} & 0.85 & 0.86 & 0.83\\
 \hline
 Phishing~\cite{misc_phishing_websites_327} & 0.96 & 0.94 & 0.94 \\
 \hline
\end{tabularx}
\caption{Baseline of Experiments}
\label{Baseline of Experiments}
\end{center}
\end{table*}

\subsection{Experiment Setting}
To evaluate the performance of the proposed method, we have implemented MC-GEN based on JAVA 8. We generated synthetic datasets and performed experiments under different cluster sizes ($k$) and different privacy parameters ($\epsilon$). The setting of $\epsilon$ varies from 0.1 to 1 and cluster size k is picked proportionally (20\%, 40\%, 60\%, 80\%, 100\%) based on the corresponding seed dataset. For each synthetic dataset, we evaluate the performance of three classification tasks: support vector machine (SVM), logistic regression, and gradient boosting. The classification models are implemented using python scikit-learn library~\cite{scikit-learn}\cite{sklearn_api}.  We also compared our method with other private synthetic data release methods. All the experiments are performed in two scenarios:


\begin{itemize}
   \item Original data training, synthetic data testing (Scenario 1):\\
  The classification algorithm trains on original data and tests on the synthetic data. For each experimental dataset, 20\% of the samples are used as seed dataset to generate the synthetic dataset, and 80\% is used as original data to train the model.
   \item Synthetic data training, original data testing (Scenario 2):\\
  The classification algorithm trains on synthetic data and tests on the original data. For each experimental dataset, 80\% of the samples are used as seed dataset to generate the synthetic dataset, and 20\% is used as original data to test the model.
\end{itemize}

\subsection{Experiment Dataset}
We conducted three datasets in our experiments from public sources (e.g. UCI repository~\cite{Dua:2019}, kaggle, libsvm datasets~\cite{chang2011libsvm}) to examine the performance of different classification algorithms. For dataset that contains categorical variables, we use one-hot encoding to convert it to numerical variables. All the features in the dataset are scaled to [-1,1]:

\subsubsection{Diabetes}
This dataset contains the diagnostic measurements of patient records. It has 768 samples and 8 features. The features include some patient information, such as blood pressure, BMI, insulin level, and age. The objective is to identify whether a patient has diabetes.
\subsubsection{Adult}
This dataset is extracted from the census bureau database. It has 48,842 samples and 14 features. The features include some information, such as age, work class, education, sex, and capital gain/loss. The objective is to predict the annual salary (over 50k or not) of each person.
\subsubsection{Phishing}
This dataset is used to predict phishing websites. It has 11,055 samples and 30 features. The features include information related address, HTML and JavaSrcipt, such as website forwarding, request URL, and domain registration length. 


%

\subsection{Experiment Setup and Metric}
In each round of experiments, we assume the data owners randomly select data from the original dataset as seed data. The size of seed data depends on the scenario. The seed data is input to MC-GEN to generate the synthetic dataset under different parameter combinations ($\epsilon$ and $k$). Then, the synthetic dataset is tested on three classification tasks to evaluate the performance. Accuracy is used as the evaluation metric, shown in Eq.~\ref{accMetric}. For each scenario on each dataset, we ran 100 rounds to get the average performance. 
\begin{equation}\label{accMetric}
    Accuracy = \frac{Number\ of\ correct\ predictions}{Total\ number\ of\ predictions}
\end{equation}

\begin{figure*}[!t]
\centering
	\begin{subfigure}{.32\textwidth}
		\includegraphics[width=\textwidth]{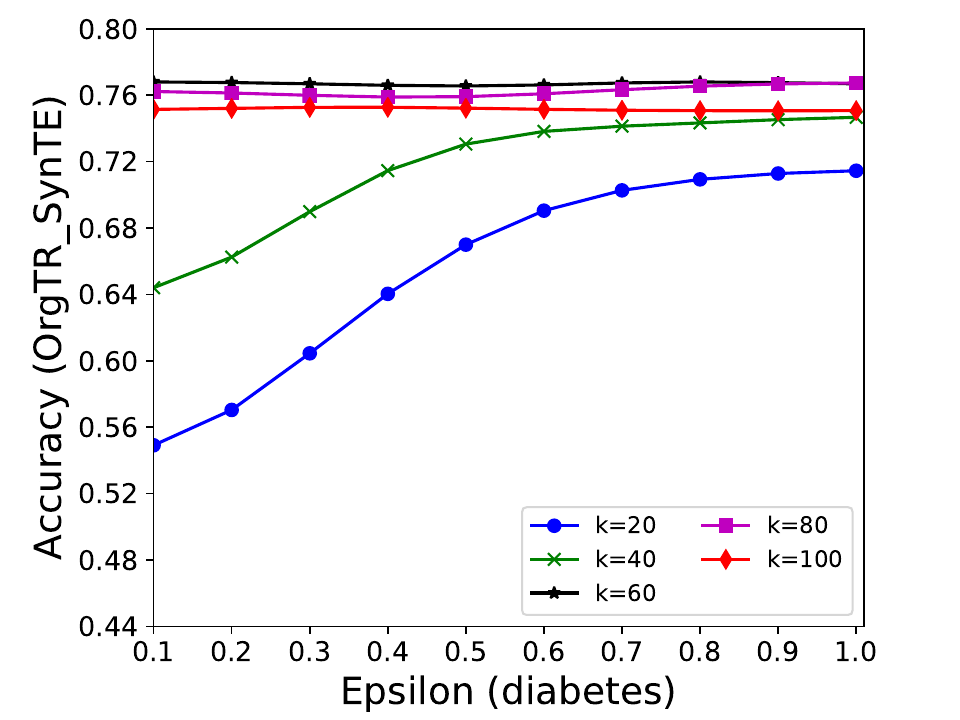}
		\caption{SVM}
        \label{diabetes_OrgTR_SVM_comparek}
	\end{subfigure}
	\begin{subfigure}{.32\textwidth}
		\includegraphics[width=\textwidth]{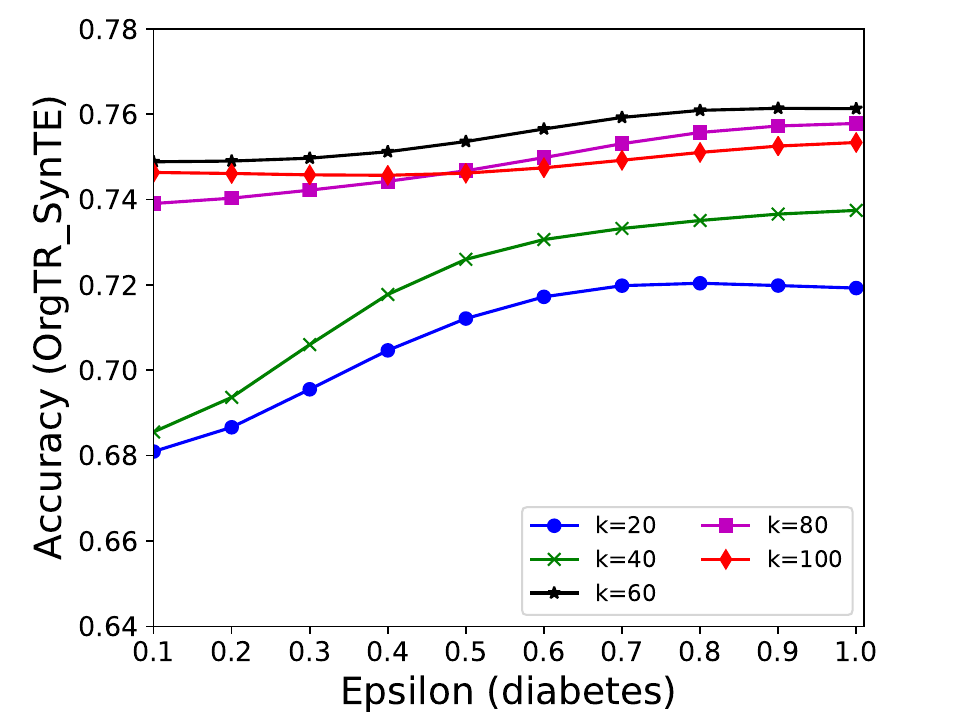}
		\caption{Logistic regression}
        \label{diabetes_OrgTR_LR_comparek}
	\end{subfigure}
	\begin{subfigure}{.32\textwidth}
		\includegraphics[width=\textwidth]{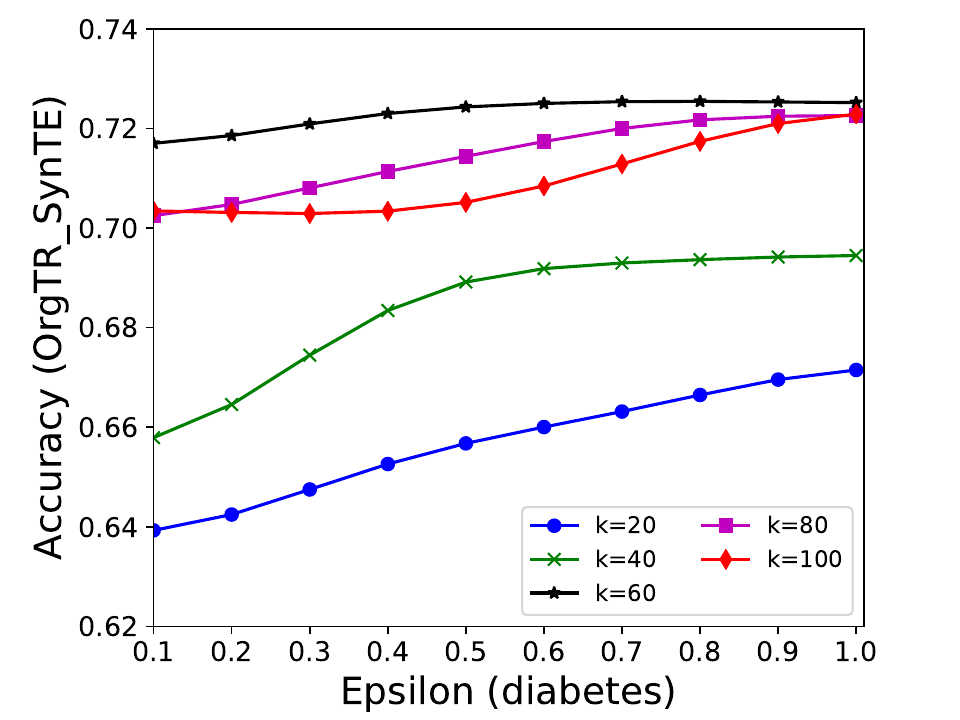}
		\caption{Gradient boosting}
        \label{diabetes_OrgTR_GB_comparek}
	\end{subfigure}
	\caption{Effect of cluster size (k) on diabetes dataset in scenario 1}
    \label{diabetes_comparek_orgTR}
\end{figure*}
%

\begin{figure*}[!t]
\centering
	\begin{subfigure}{.32\textwidth}
		\includegraphics[width=\textwidth]{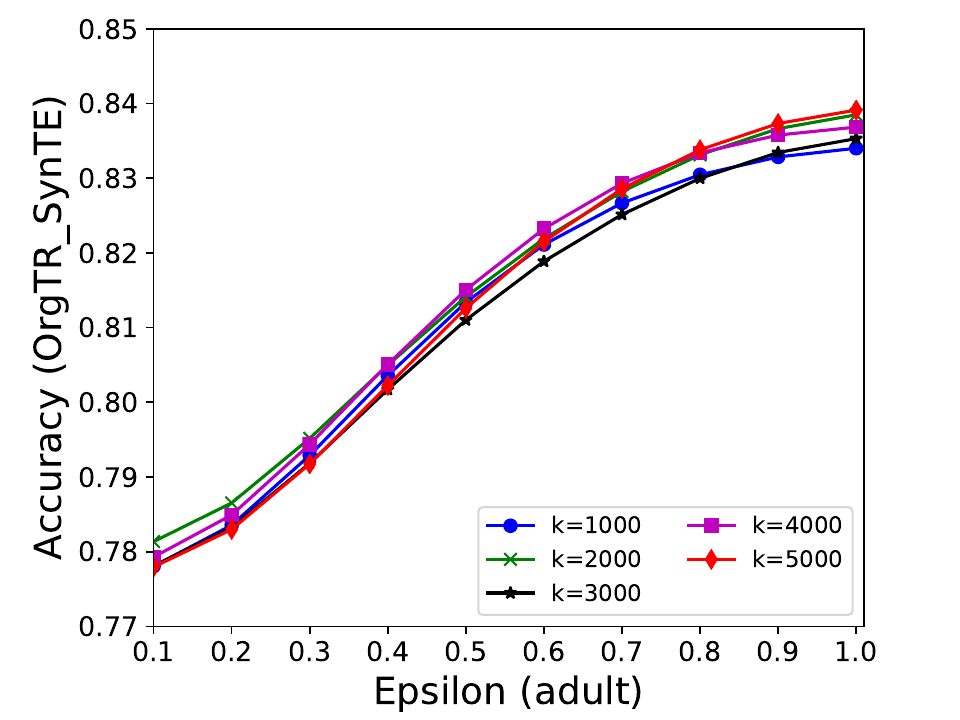}
		\caption{SVM}
        \label{adult9_OrgTR_SVM_comparek}
	\end{subfigure}
	\begin{subfigure}{.32\textwidth}
		\includegraphics[width=\textwidth]{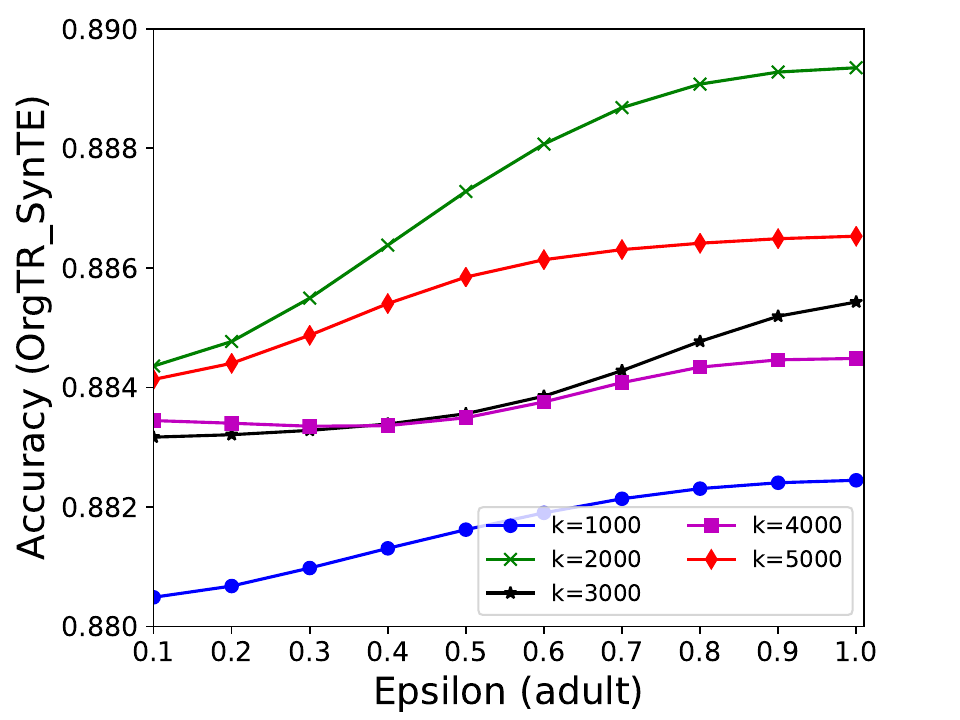}
		\caption{Logistic regression}
        \label{adult9_OrgTR_LR_comparek}
	\end{subfigure}
	\begin{subfigure}{.32\textwidth}
		\includegraphics[width=\textwidth]{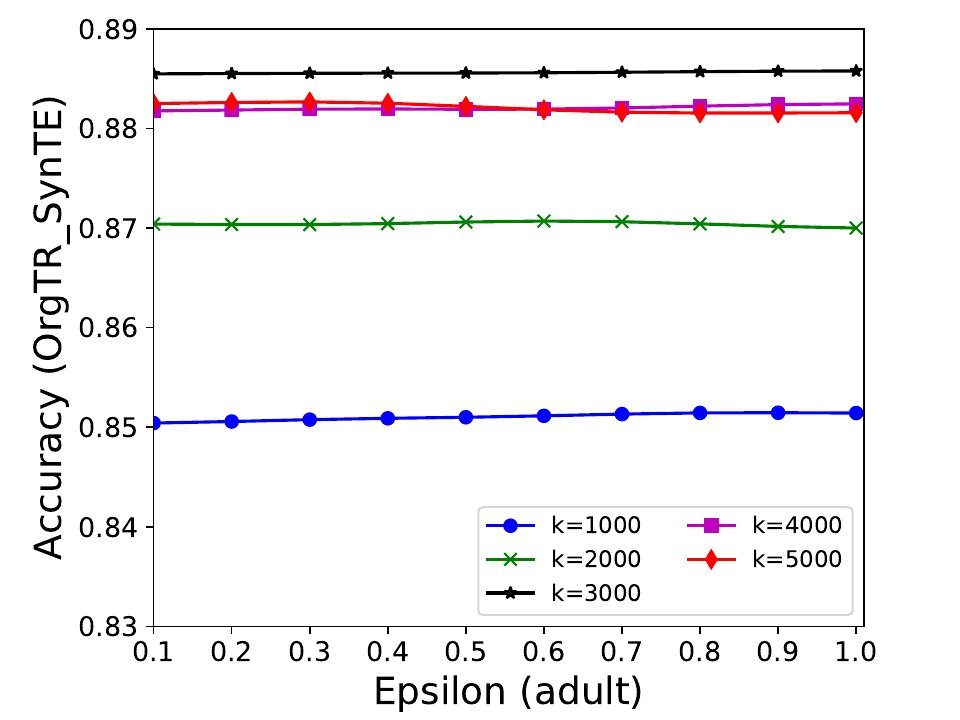}
		\caption{Gradient boosting}
        \label{adult9_OrgTR_GB_comparek}
	\end{subfigure}
	\caption{Effect of cluster size (k) on adult dataset in scenario 1}
    \label{adults_comparek_orgTR}
\end{figure*}
%

\begin{figure*}[!t]
\centering
	\begin{subfigure}{.32\textwidth}
		\includegraphics[width=\textwidth]{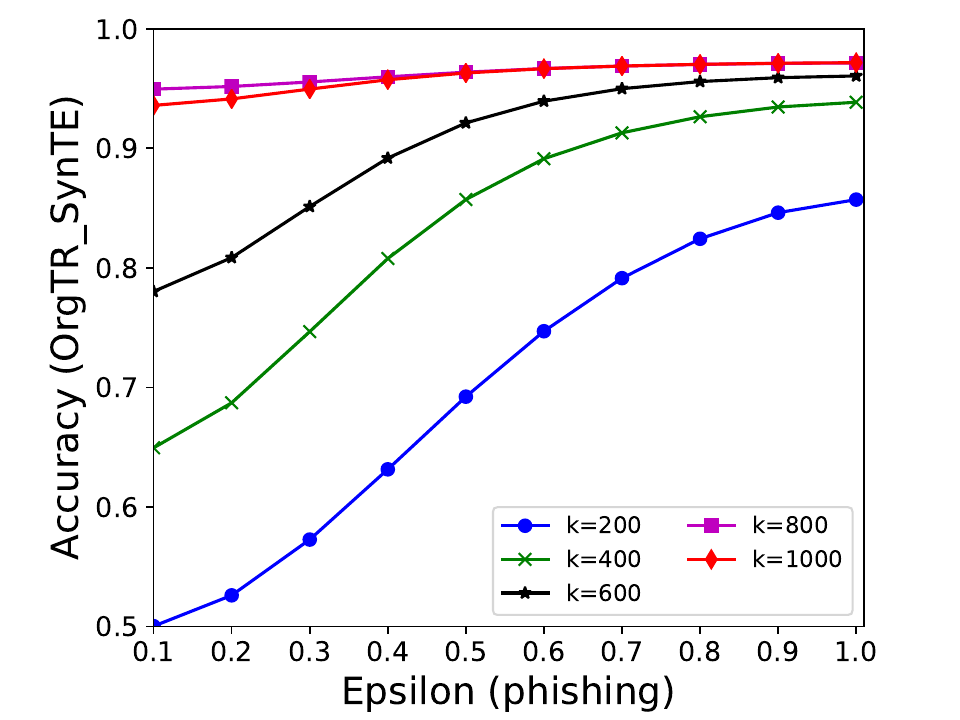}
		\caption{SVM}
        \label{phishing_OrgTR_SVM_comparek}
	\end{subfigure}
	\begin{subfigure}{.32\textwidth}
		\includegraphics[width=\textwidth]{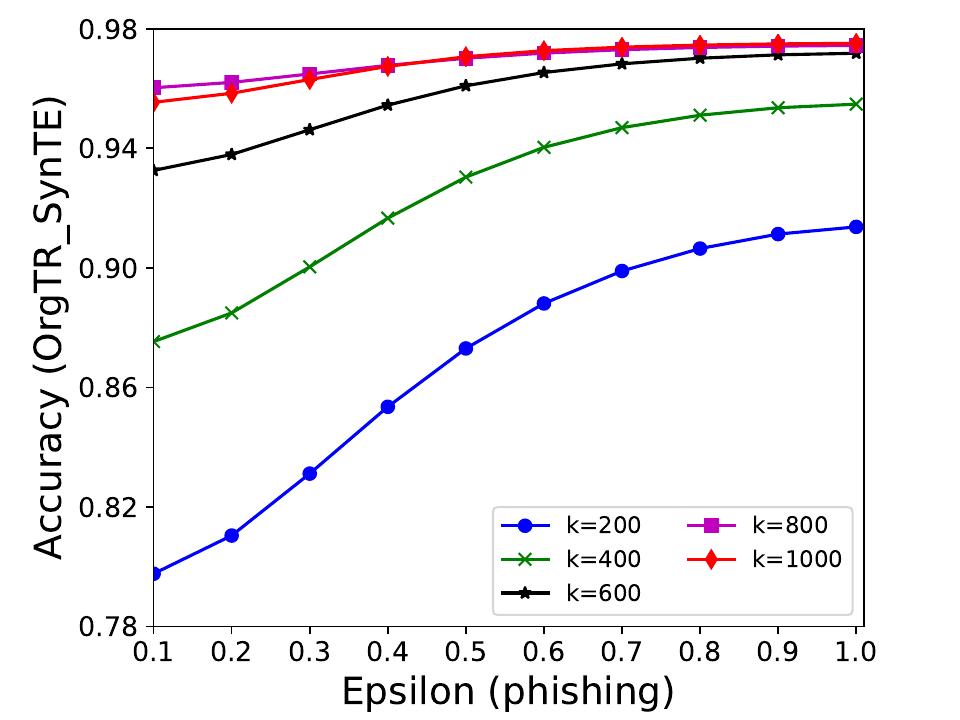}
		\caption{Logistic regression}
        \label{phishing_OrgTR_LR_comparek}
	\end{subfigure}
	\begin{subfigure}{.32\textwidth}
		\includegraphics[width=\textwidth]{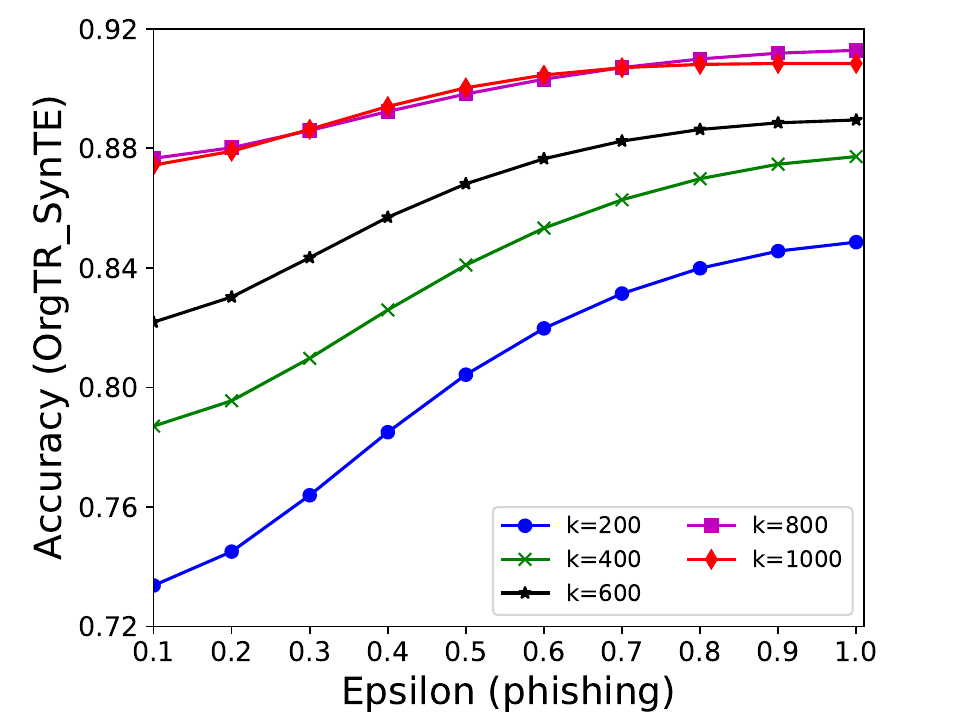}
		\caption{Gradient boosting}
        \label{phishing_OrgTR_GB_comparek}
	\end{subfigure}
	\caption{Effect of cluster size (k) on phishing dataset in scenario 1}
    \label{phishing_comparek_orgTR}
\end{figure*}
%

\begin{figure*}[!t]
\centering
	\begin{subfigure}{.32\textwidth}
		\includegraphics[width=\textwidth]{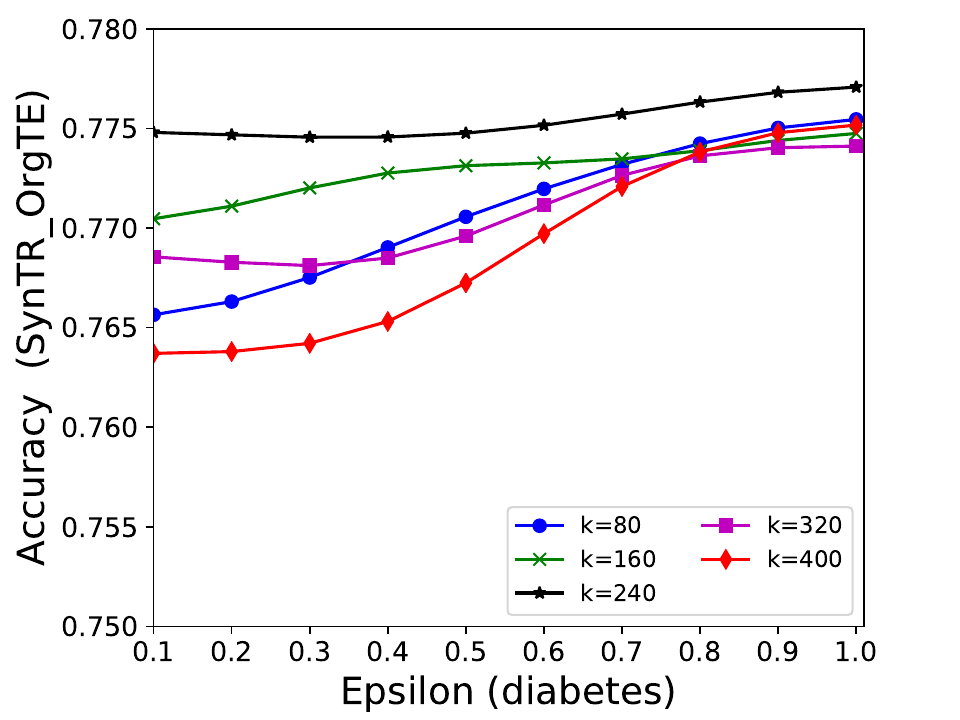}
		\caption{SVM}
        \label{diabetes_SynTR_SVM_comparek}
	\end{subfigure}
	\begin{subfigure}{.32\textwidth}
		\includegraphics[width=\textwidth]{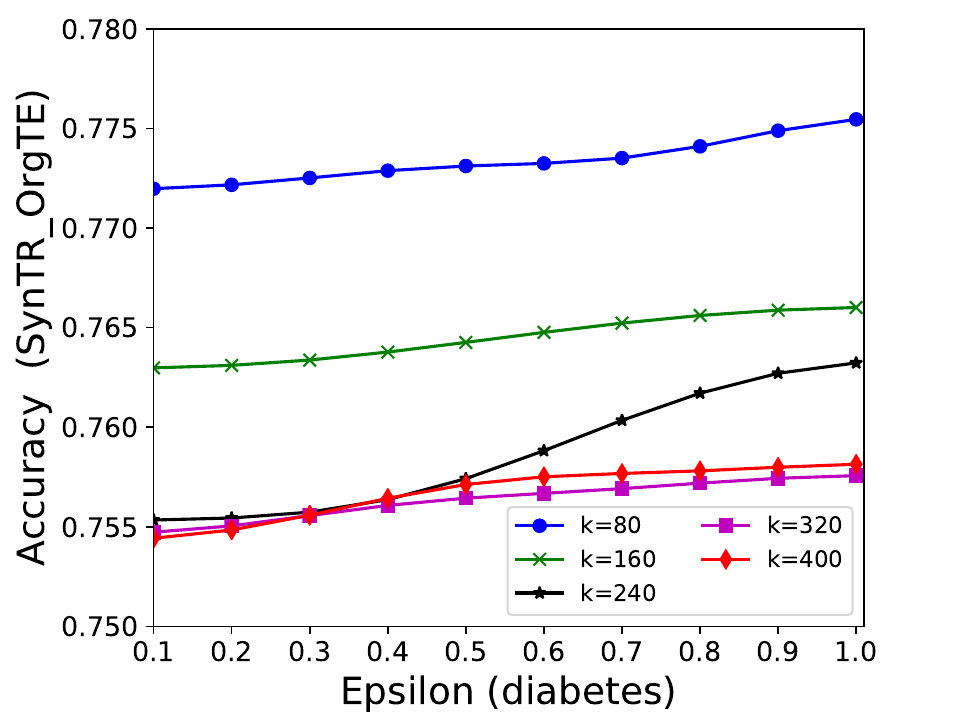}
		\caption{Logistic regression}
        \label{diabetes_SynTR_LR_comparek}
	\end{subfigure}
	\begin{subfigure}{.32\textwidth}
		\includegraphics[width=\textwidth]{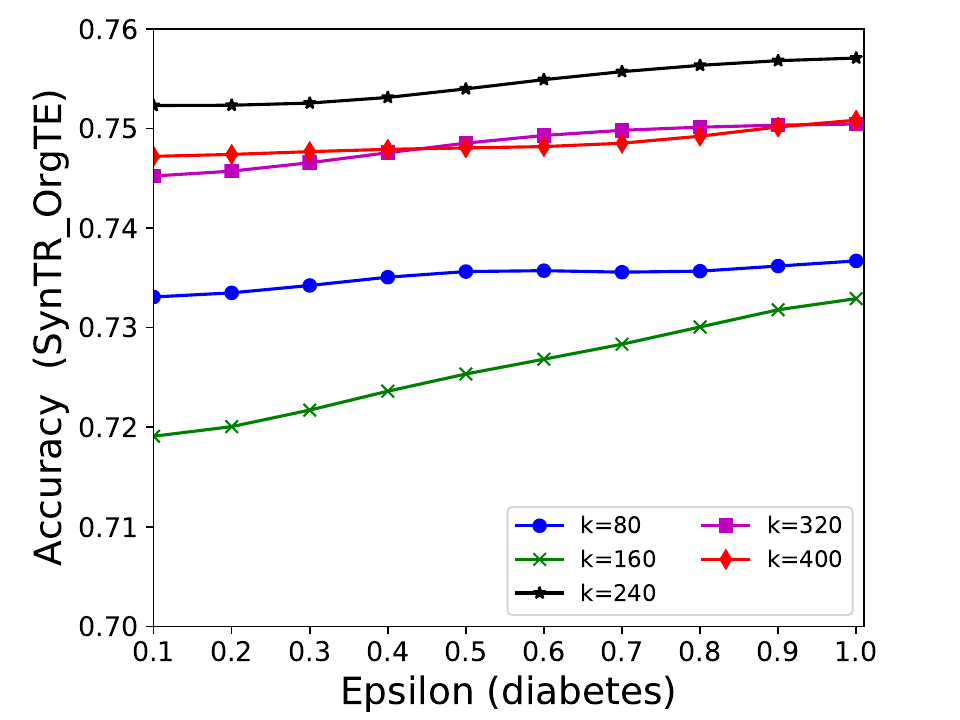}
		\caption{Gradient boosting}
        \label{diabetes_SynTR_GB_comparek}
	\end{subfigure}
	\caption{Effect of cluster size (k) on diabetes dataset in scenario 2}
    \label{diabetes_comparek_synTR}
\end{figure*}
%

\begin{figure*}[!t]
\centering
	\begin{subfigure}{.32\textwidth}
		\includegraphics[width=\textwidth]{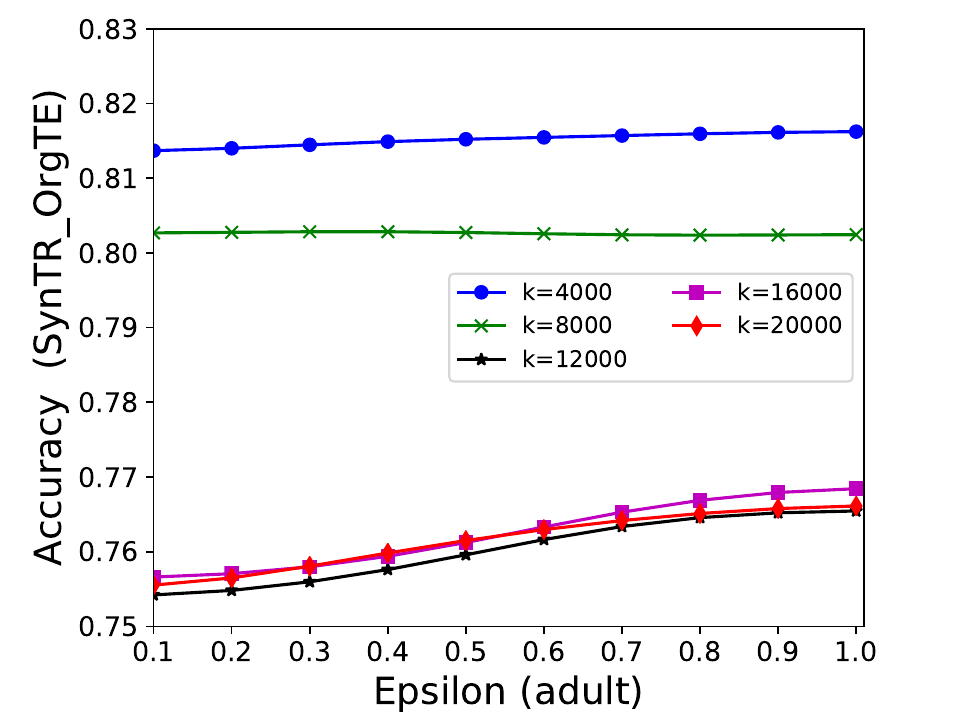}
		\caption{SVM}
        \label{adult9_SynTR_SVM_comparek}
	\end{subfigure}
	\begin{subfigure}{.32\textwidth}
		\includegraphics[width=\textwidth]{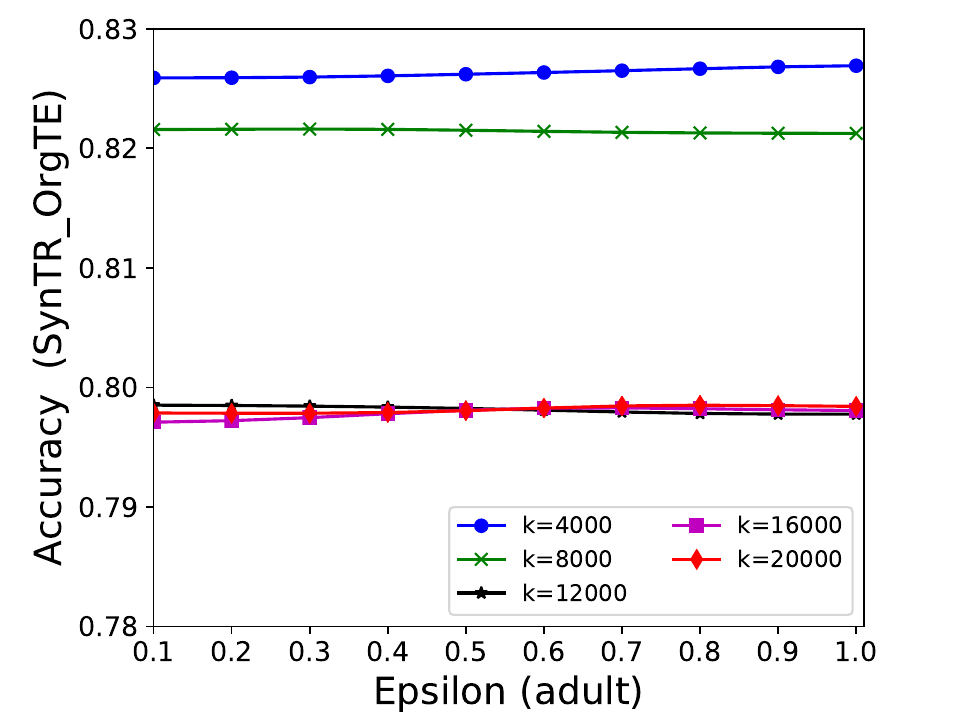}
		\caption{Logistic regression}
        \label{adult9_SynTR_LR_comparek}
	\end{subfigure}
	\begin{subfigure}{.32\textwidth}
		\includegraphics[width=\textwidth]{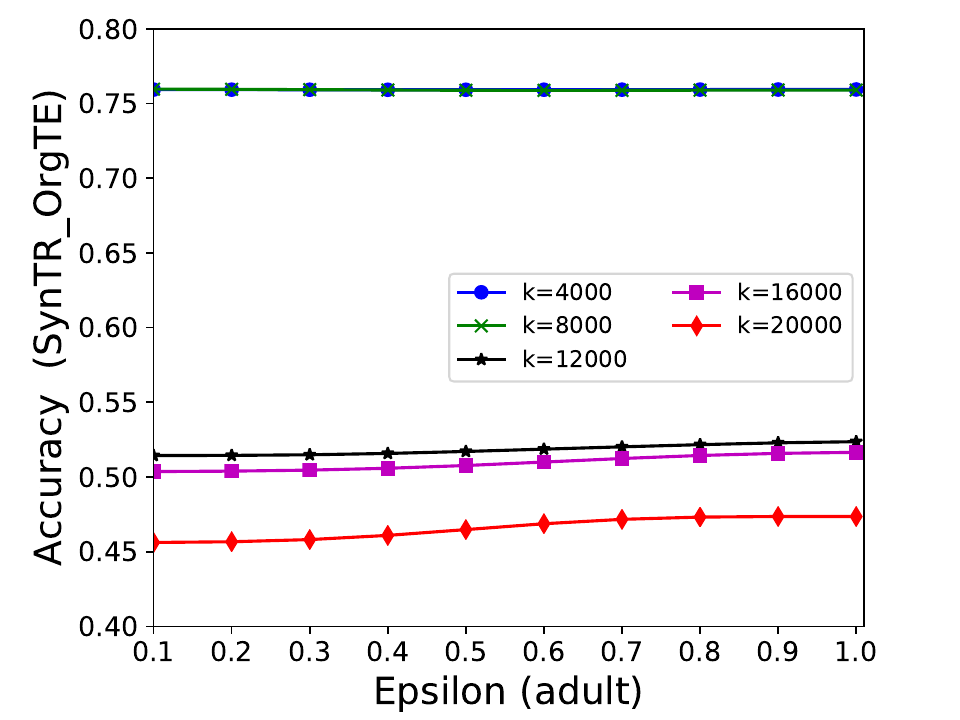}
		\caption{Gradient boosting}
        \label{adult9_SynTR_GB_comparek}
	\end{subfigure}
	\caption{Effect of cluster size (k) on adult dataset in scenario 2}
    \label{adults_comparek_synTR}
\end{figure*}
%

\begin{figure*}[!t]
\centering
	\begin{subfigure}{.32\textwidth}
		\includegraphics[width=\textwidth]{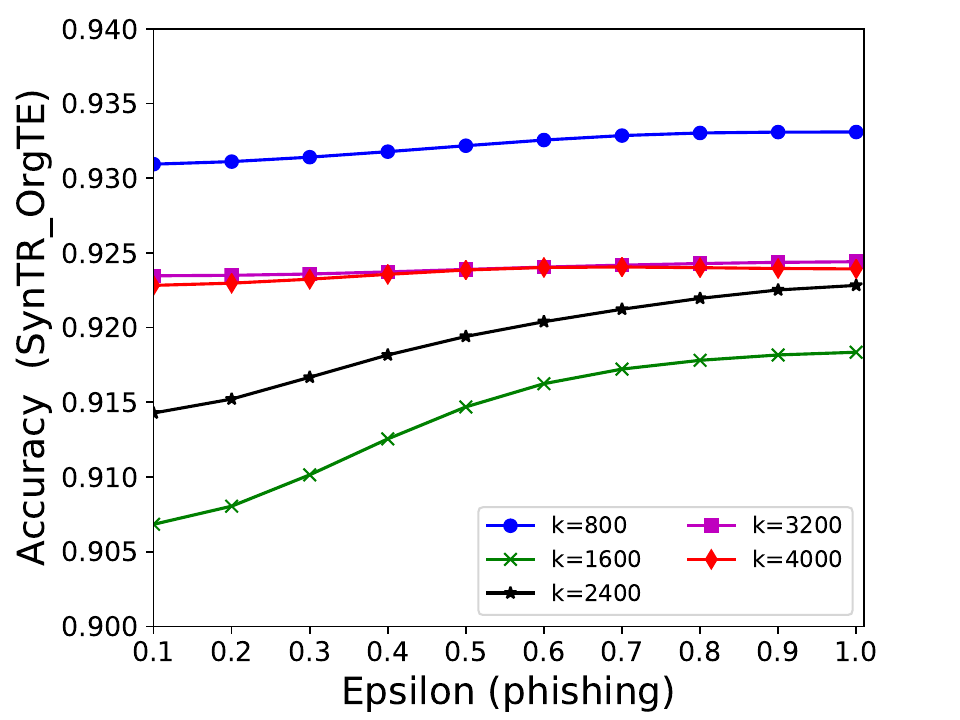}
		\caption{SVM}
        \label{phishing_SynTR_SVM_comparek}
	\end{subfigure}
	\begin{subfigure}{.32\textwidth}
		\includegraphics[width=\textwidth]{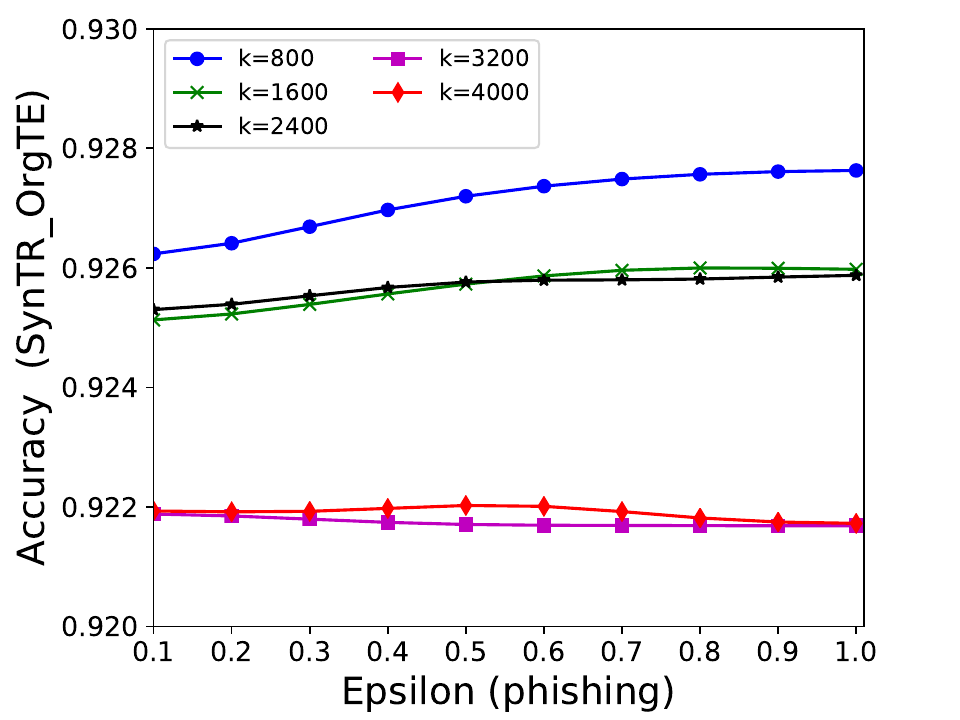}
		\caption{Logistic regression}
        \label{phishing_SynTR_LR_comparek}
	\end{subfigure}
	\begin{subfigure}{.32\textwidth}
		\includegraphics[width=\textwidth]{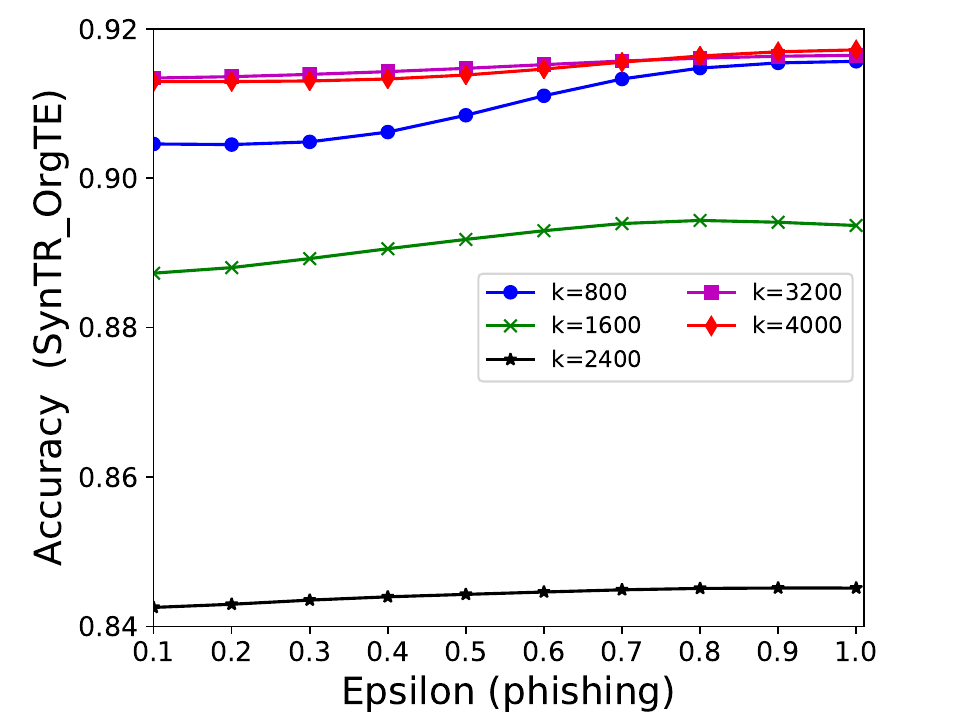}
		\caption{Gradient boosting}
        \label{phishing_SynTR_GB_comparek}
	\end{subfigure}
	\caption{Effect of cluster size (k) on phishing dataset in scenario 2}
    \label{phishing_comparek_synTR}
\end{figure*}
%

\begin{figure*}[!t]
\centering
	\begin{subfigure}{.32\textwidth}
		\includegraphics[width=\textwidth]{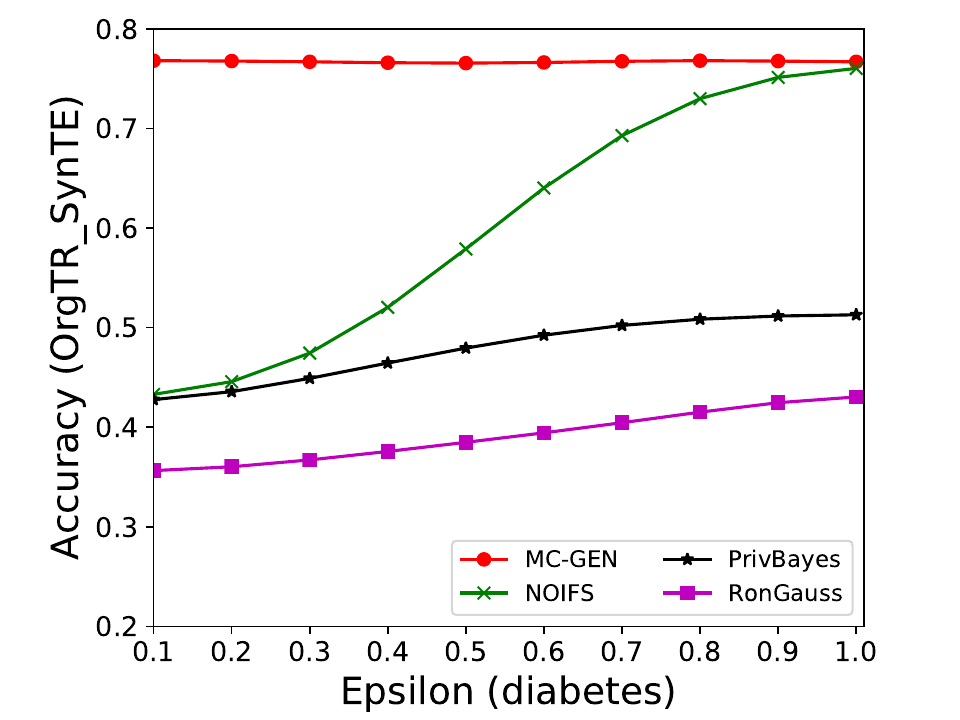}
		\caption{SVM}
        \label{diabetes_OrgTR_SVM_compareMethod}
	\end{subfigure}
	\begin{subfigure}{.32\textwidth}
		\includegraphics[width=\textwidth]{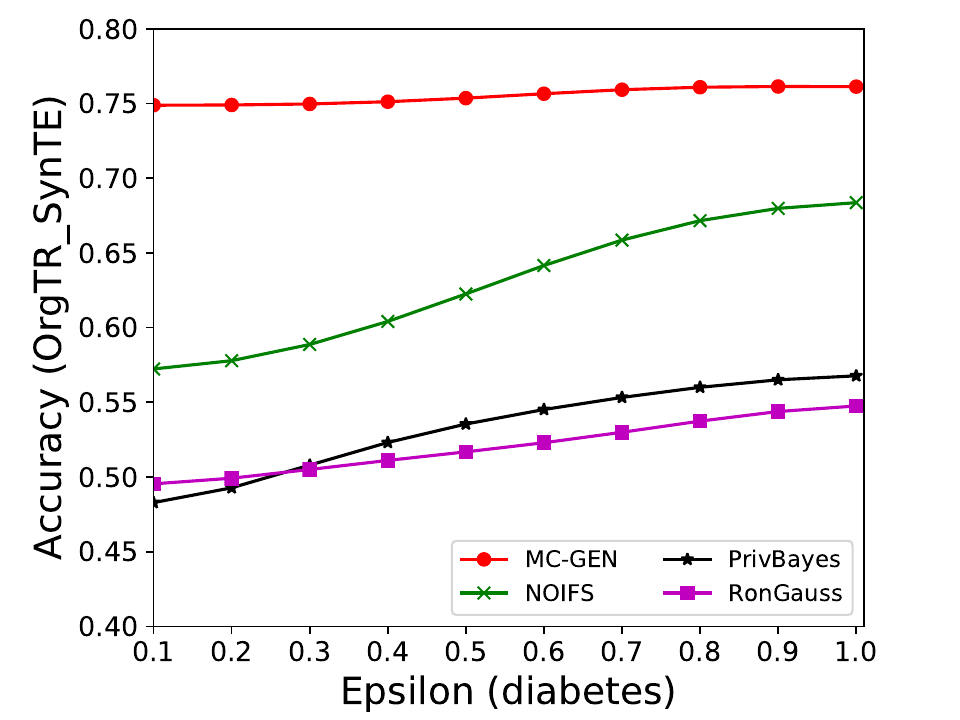}
		\caption{Logistic regression}
        \label{diabetes_OrgTR_LR_compareMethod}
	\end{subfigure}
	\begin{subfigure}{.32\textwidth}
		\includegraphics[width=\textwidth]{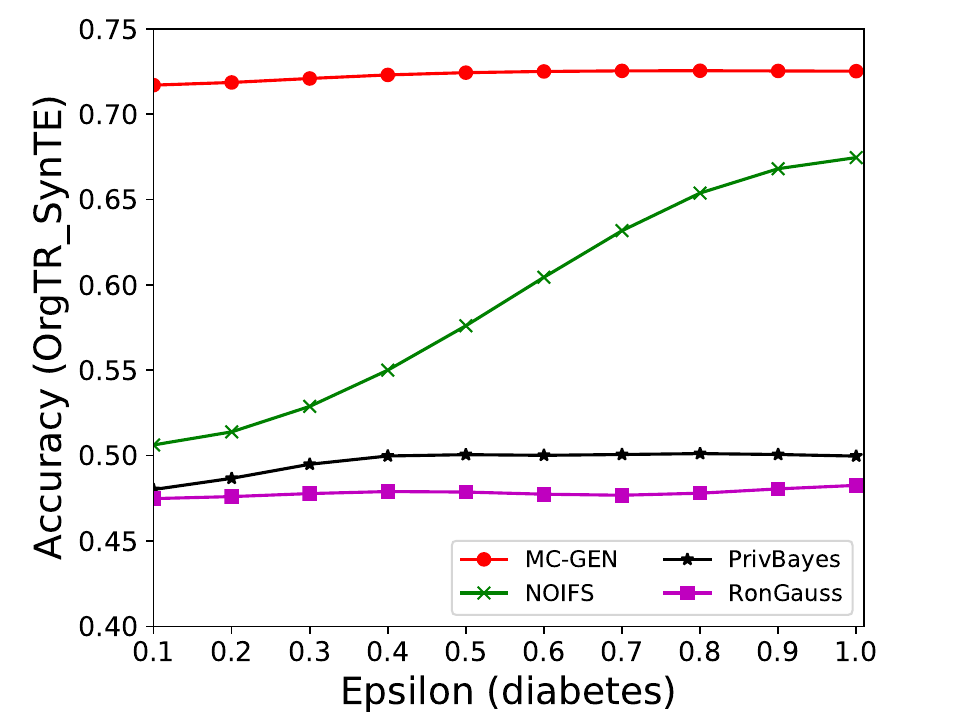}
		\caption{Gradient boosting}
        \label{diabetes_OrgTR_GB_compareMethod}
	\end{subfigure}
	\caption{Comparison with other generation methods on diabetes dataset in scenario 1}
    \label{diabetes_compareMethod_orgTR}
\end{figure*}
%

\begin{figure*}[!t]
\centering
	\begin{subfigure}{.32\textwidth}
		\includegraphics[width=\textwidth]{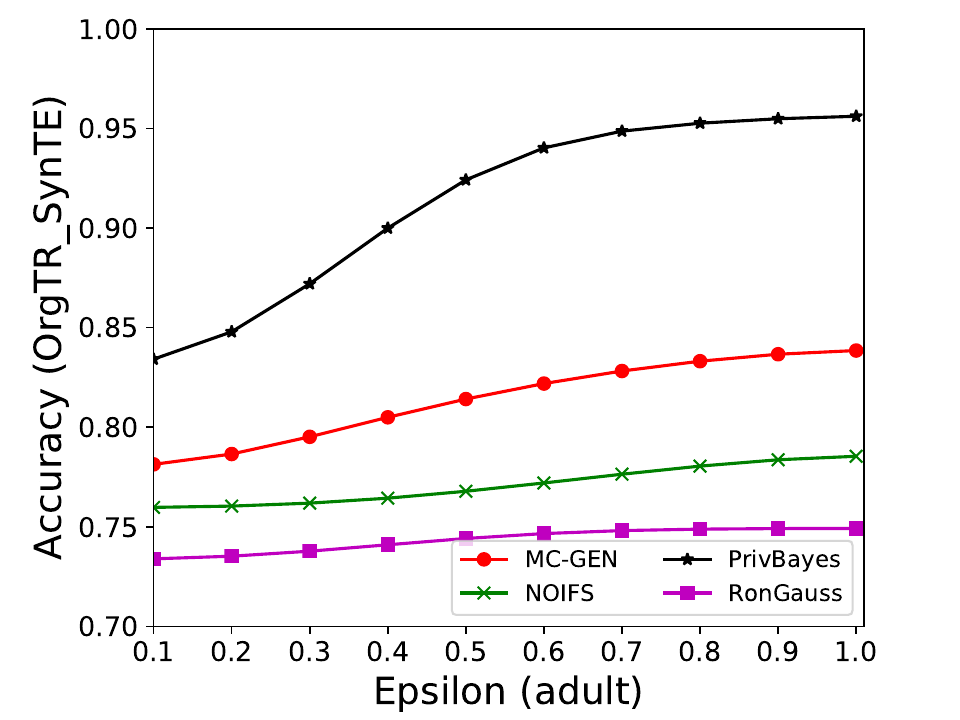}
		\caption{SVM}
        \label{adult9_OrgTR_SVM_compareMethod}
	\end{subfigure}
	\begin{subfigure}{.32\textwidth}
		\includegraphics[width=\textwidth]{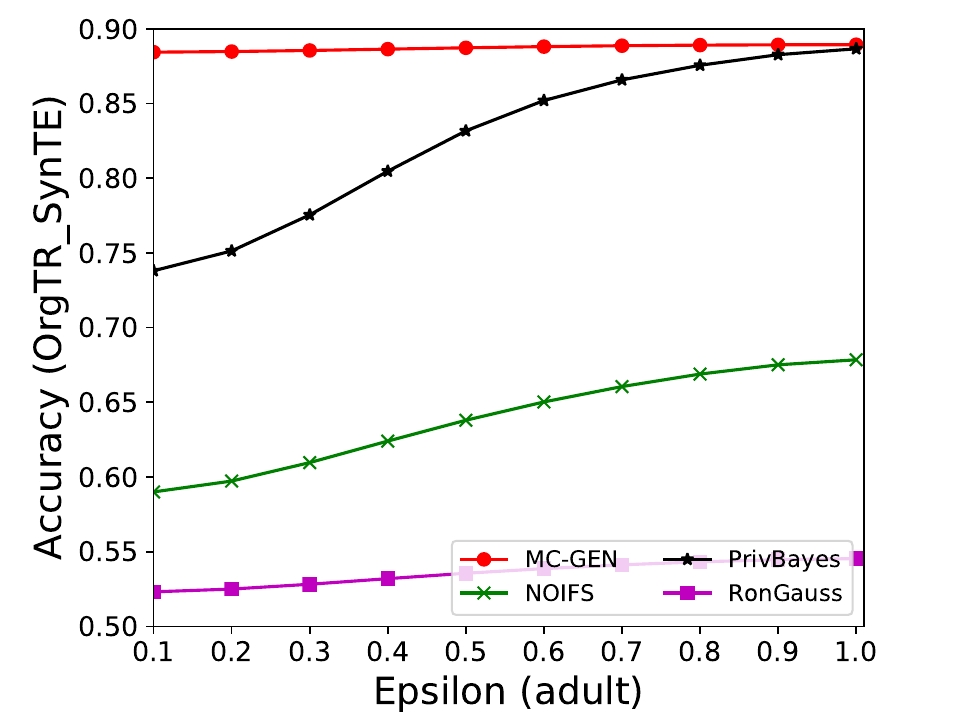}
		\caption{Logistic regression}
        \label{adult9_OrgTR_LR_compareMethod}
	\end{subfigure}
	\begin{subfigure}{.32\textwidth}
		\includegraphics[width=\textwidth]{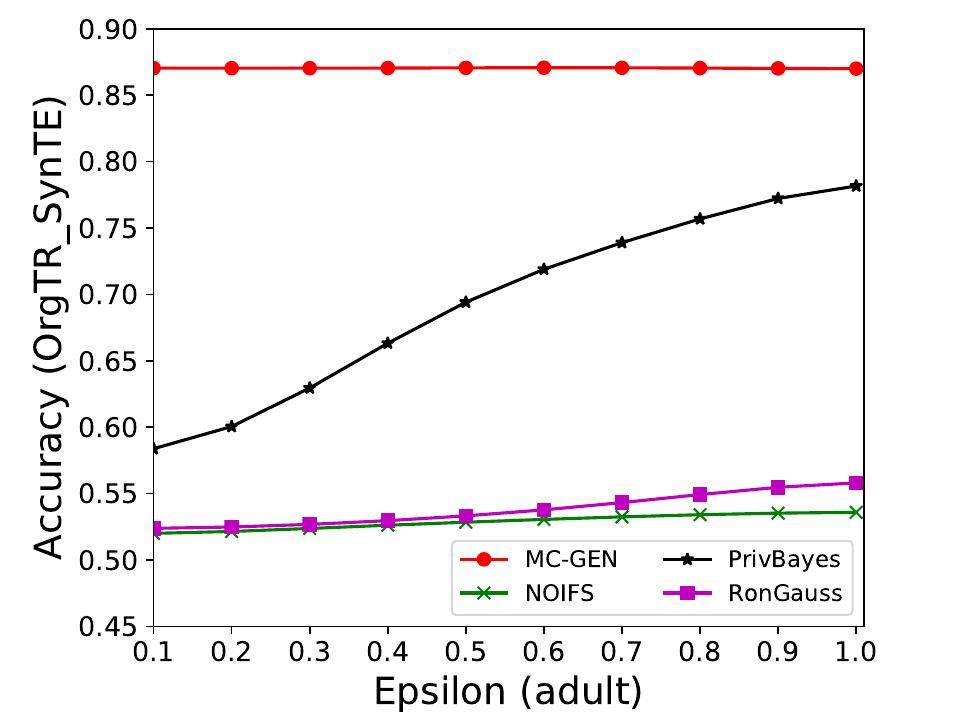}
		\caption{Gradient boosting}
        \label{adult9_OrgTR_GB_compareMethod}
	\end{subfigure}
	\caption{Comparison with other generation methods on adult dataset in scenario 1}
    \label{adults_compareMethod_orgTR}
\end{figure*}
%

\begin{figure*}[!t]
\centering
	\begin{subfigure}{.32\textwidth}
		\includegraphics[width=\textwidth]{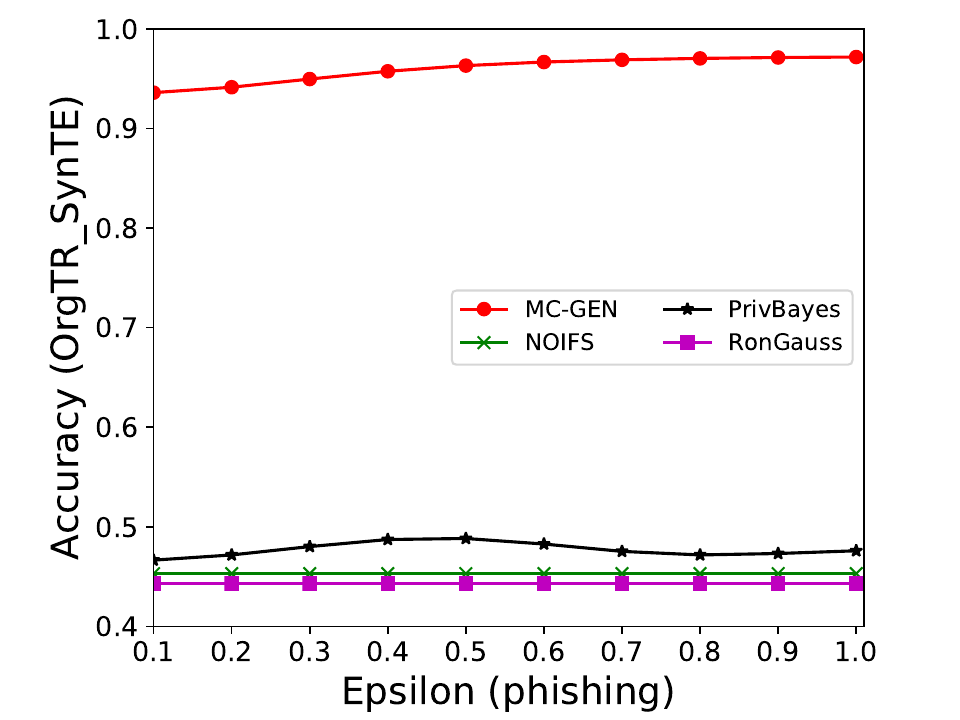}
		\caption{SVM}
        \label{phishing_OrgTR_SVM_compareMethod}
	\end{subfigure}
	\begin{subfigure}{.32\textwidth}
		\includegraphics[width=\textwidth]{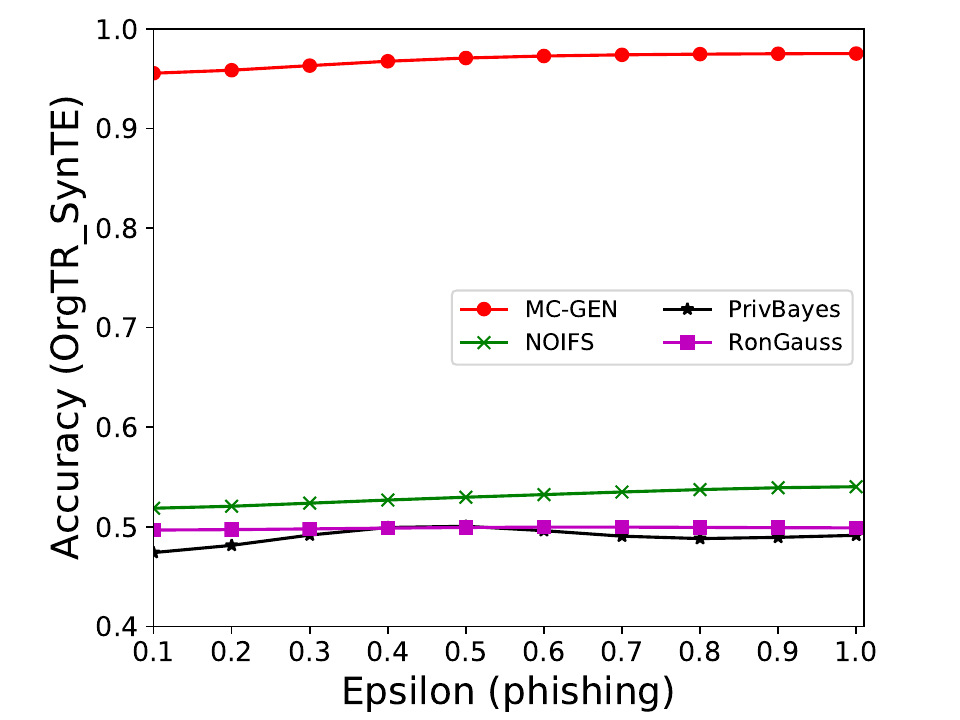}
		\caption{Logistic regression}
        \label{phishing_OrgTR_LR_compareMethod}
	\end{subfigure}
	\begin{subfigure}{.32\textwidth}
		\includegraphics[width=\textwidth]{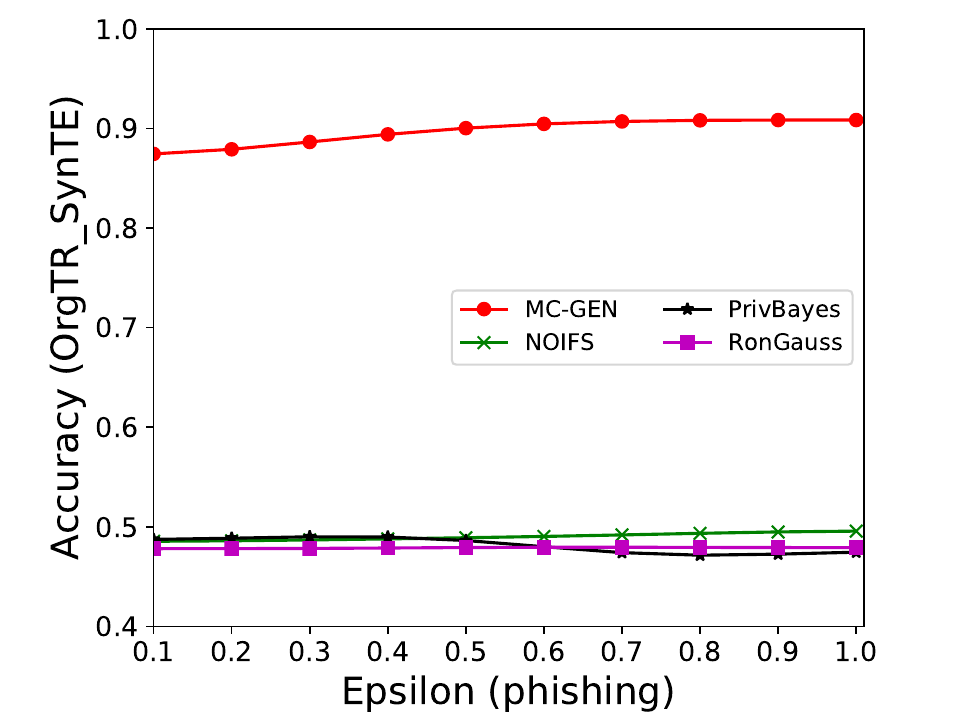}
		\caption{Gradient boosting}
        \label{phishing_OrgTR_GB_compareMethod}
	\end{subfigure}
	\caption{Comparison with other generation methods on phishing dataset in scenario 1}
    \label{phishing_compareMethod_orgTR}
\end{figure*}
%

\begin{figure*}[!t]
\centering
	\begin{subfigure}{.32\textwidth}
		\includegraphics[width=\textwidth]{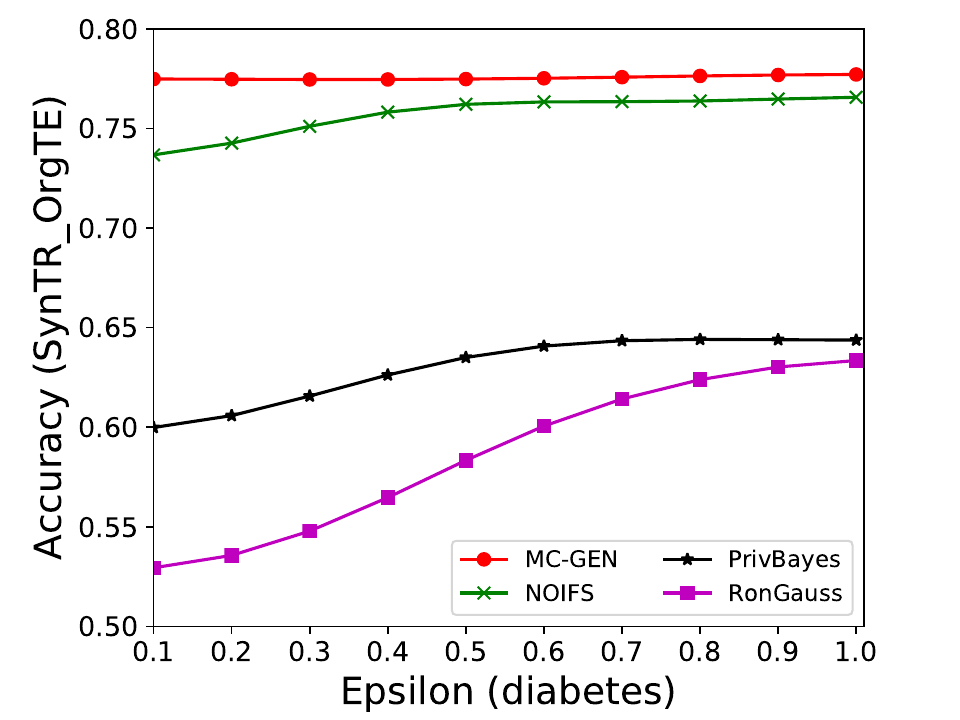}
		\caption{SVM}
        \label{diabetes_SynTR_SVM_compareMethod}
	\end{subfigure}
	\begin{subfigure}{.32\textwidth}
		\includegraphics[width=\textwidth]{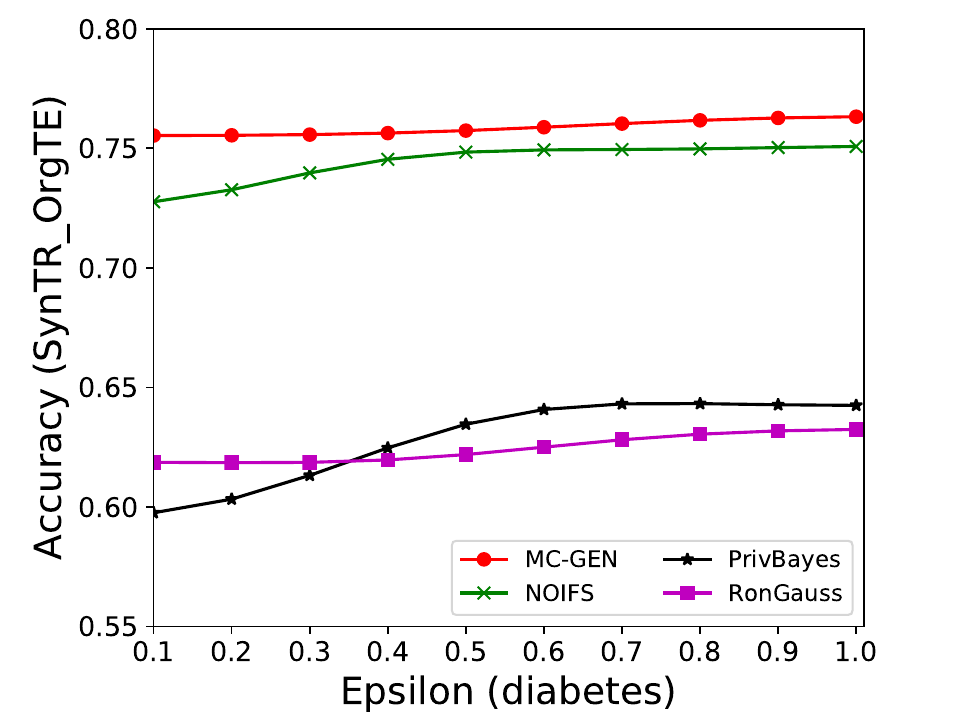}
		\caption{Logistic regression}
        \label{diabetes_SynTR_LR_compareMethod}
	\end{subfigure}
	\begin{subfigure}{.32\textwidth}
		\includegraphics[width=\textwidth]{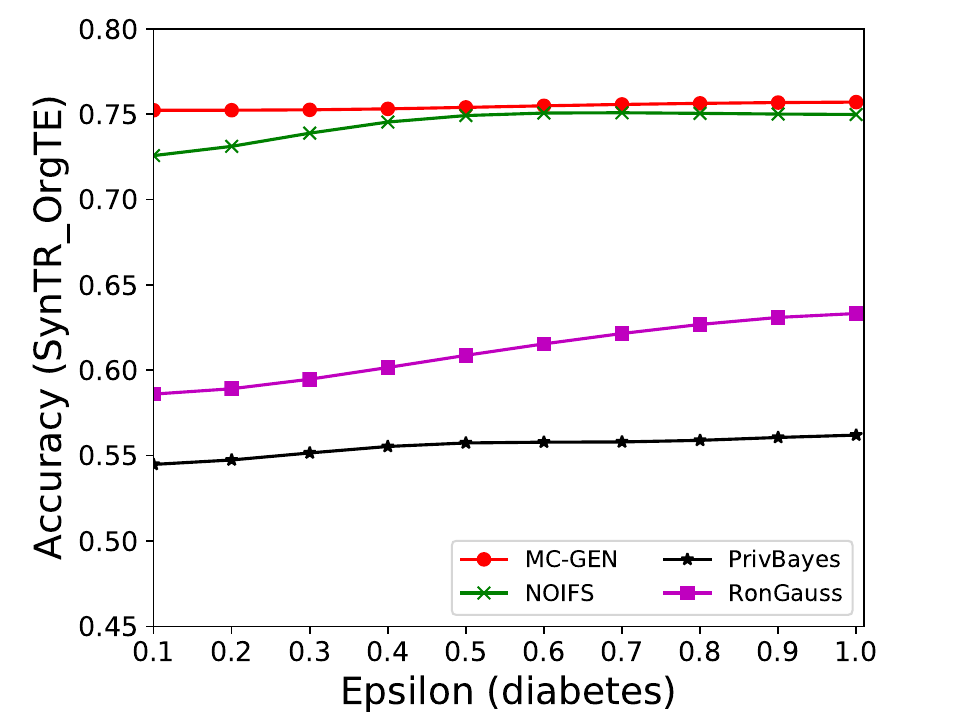}
		\caption{Gradient boosting}
        \label{diabetes_SynTR_GB_compareMethod}
	\end{subfigure}
	\caption{Comparison with other generation methods on diabetes dataset in scenario 2}
    \label{diabetes_compareMethod_synTR}
\end{figure*}
%

\begin{figure*}[!t]
\centering
	\begin{subfigure}{.32\textwidth}
		\includegraphics[width=\textwidth]{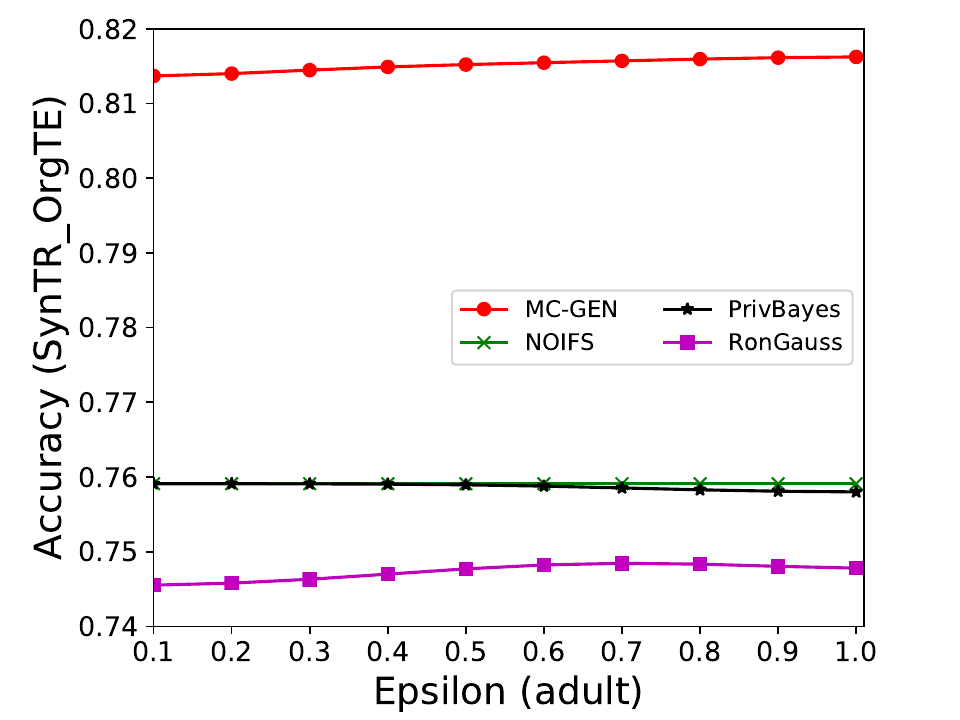}
		\caption{SVM}
        \label{adult9_SynTR_SVM_compareMethod}
	\end{subfigure}
	\begin{subfigure}{.32\textwidth}
		\includegraphics[width=\textwidth]{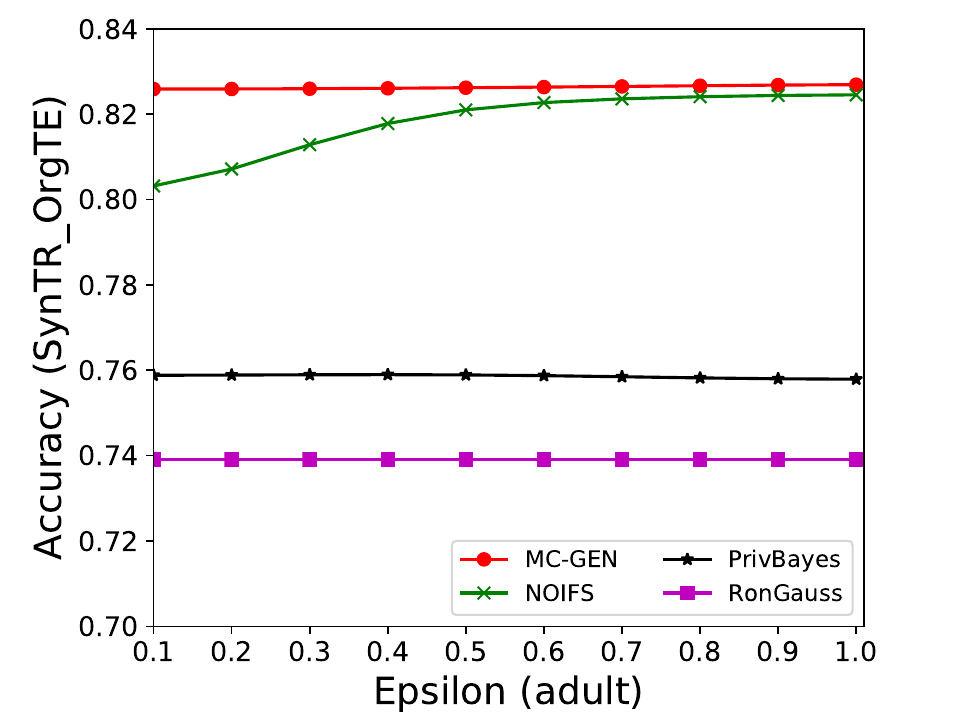}
		\caption{Logistic regression}
        \label{adult9_SynTR_LR_compareMethod}
	\end{subfigure}
	\begin{subfigure}{.32\textwidth}
		\includegraphics[width=\textwidth]{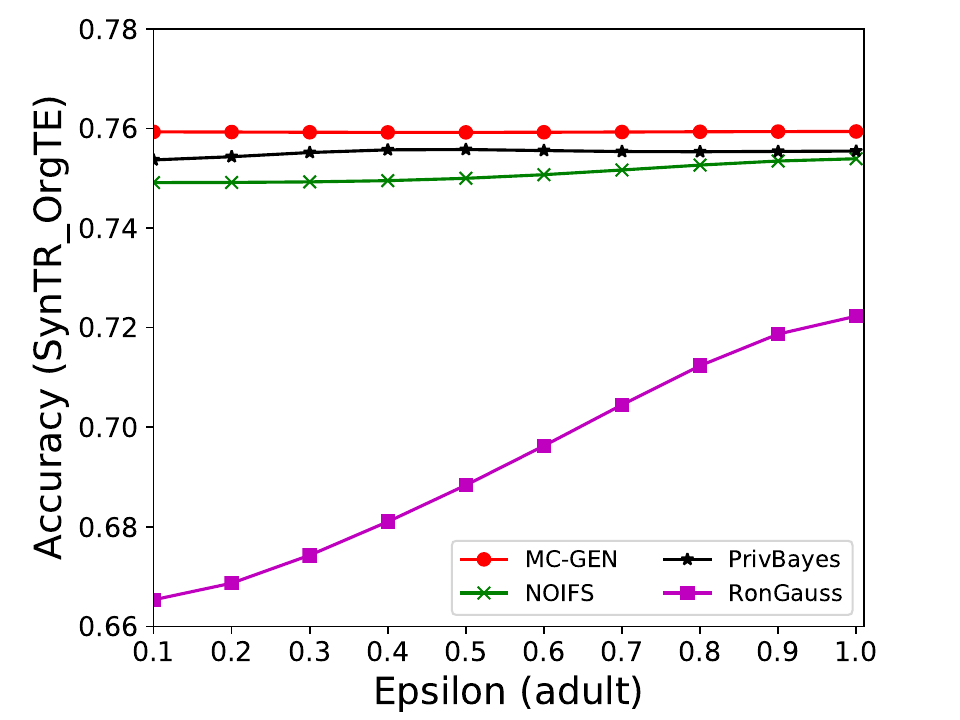}
		\caption{Gradient boosting}
        \label{adult9_SynTR_GB_compareMethod}
	\end{subfigure}
	\caption{Comparison with other generation methods on adult dataset in scenario 2}
    \label{adults_compareMethod_synTR}
\end{figure*}
%

\begin{figure*}[!t]
\centering
	\begin{subfigure}{.32\textwidth}
		\includegraphics[width=\textwidth]{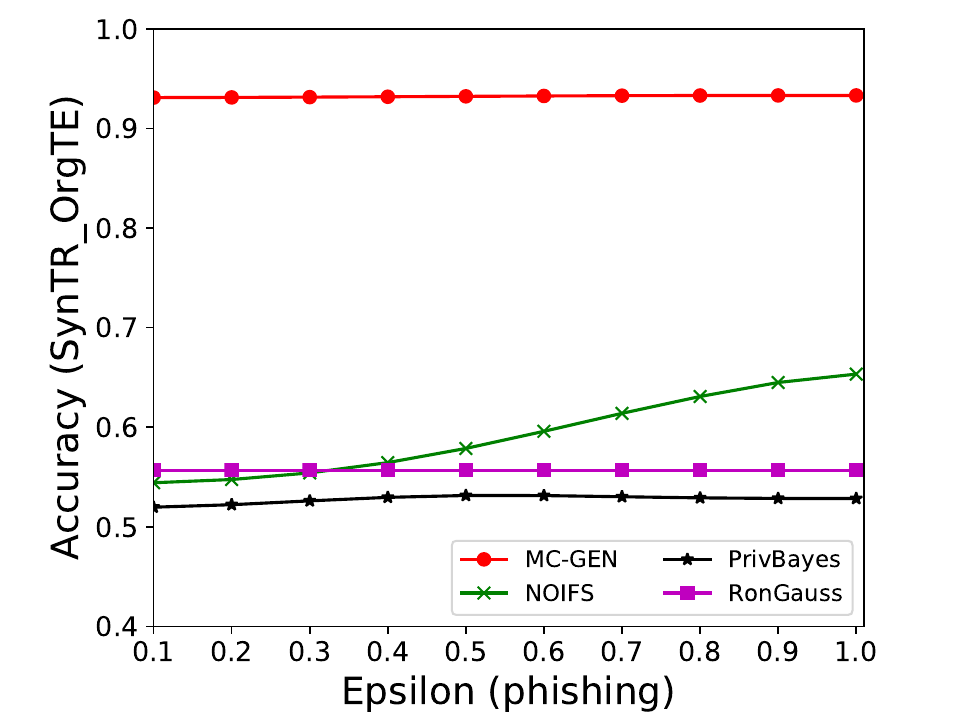}
		\caption{SVM}
        \label{phishing_SynTR_SVM_compareMethod}
	\end{subfigure}
	\begin{subfigure}{.32\textwidth}
		\includegraphics[width=\textwidth]{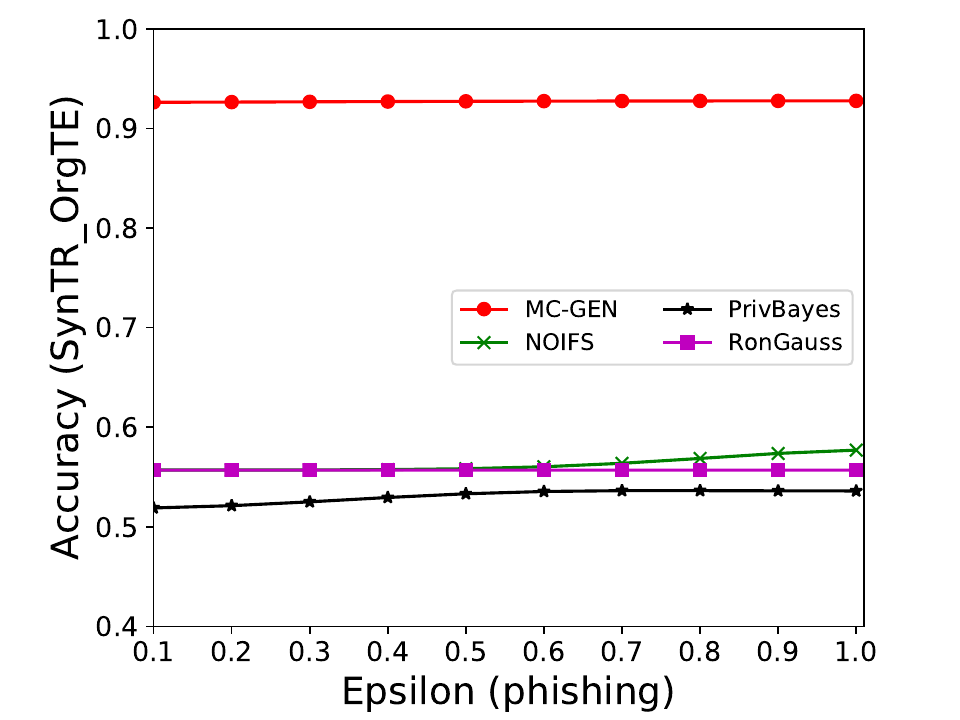}
		\caption{Logistic regression}
        \label{phishing_SynTR_LR_compareMethod}
	\end{subfigure}
	\begin{subfigure}{.32\textwidth}
		\includegraphics[width=\textwidth]{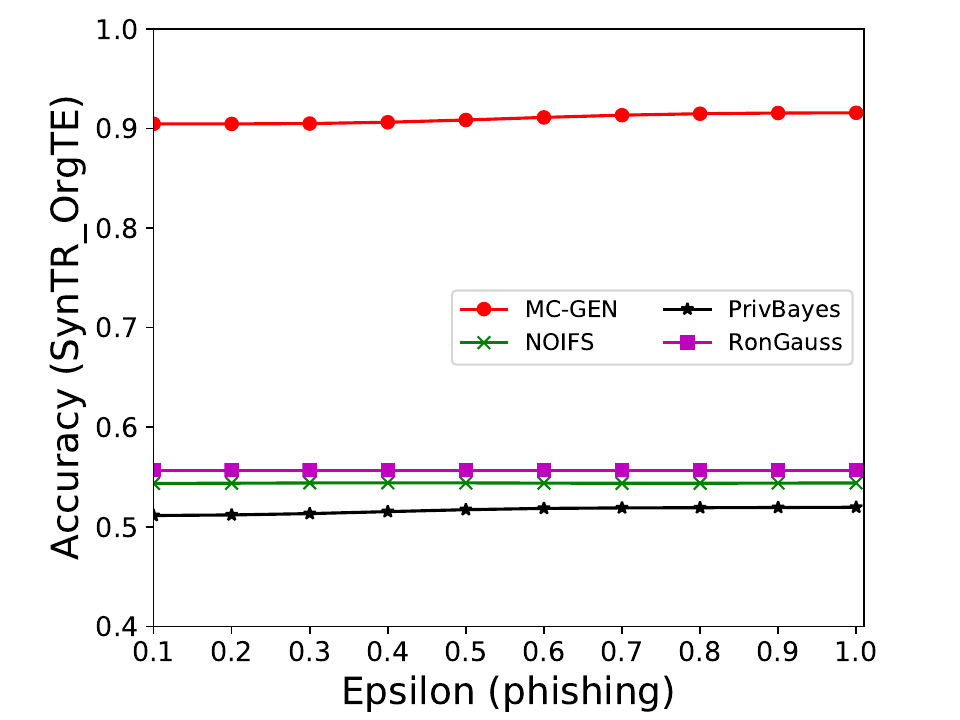}
		\caption{Gradient boosting}
        \label{phishing_SynTR_GB_compareMethod}
	\end{subfigure}
	\caption{Comparison with other generation methods on phishing dataset in scenario 2}
    \label{phishing_compareMethod_synTR}
\end{figure*}

\subsection{Effectiveness Analysis}
In this section, we evaluate the effectiveness of our approach on three classification datasets with three classification algorithms. The best performance of MC-GEN on each dataset is always very close to the baseline (Table. \ref{Baseline of Experiments}). Baselines were trained and tested on the corresponding model using original datasets. Each baseline takes the average performance of 100 experiments.

\subsubsection{Effectiveness Analysis of Different Parameters}
The performance of MC-GEN is affected by two parameters: privacy budget ($\epsilon$) and cluster size ($k$). To evaluate the effectiveness of a single parameter, the other parameter remains the same. As shown in Fig.\ref{diabetes_comparek_orgTR}, \ref{adults_comparek_orgTR}, \ref{phishing_comparek_orgTR}, \ref{diabetes_comparek_synTR}, \ref{adults_comparek_synTR} and \ref{phishing_comparek_synTR}, we observe that:

\begin{enumerate}
\item Privacy budget ($\epsilon$): This parameter controls the noise variance on the synthetic datasets. In DP mechanism, greater $\epsilon$ results in smaller-scaled noise, and vice versa. In our experiments, we investigate $\epsilon$ from 0.1 to 1. As shown in the results, the performance in all classification tasks increased as the privacy budget $\epsilon$ increased. While DP sanitizer adds noise on extracted statistical information (Fig. \ref{syngenerator}), large $\epsilon$ (small-scaled noise) make $\mu\_DP$ and $\Sigma\_DP$ closer to $\mu$ and $\Sigma$, which results in less noise on synthetic datasets. In other words, the generative model captures more accurate statistical information. Thus, a greater $\epsilon$ would result in a better performance.

\item Cluster size ($k$): This parameter controls the number of data points in each cluster. Namely, the number of data points used to build each generative model. We investigate it from 20\% to 100\% based on the total number of seed data. Intuitively, the smaller $k$ would make synthetic data generation more precise since the data is captured in a high-resolution way. However, based on the results shown in Fig. \ref{adults_comparek_orgTR}, \ref{phishing_comparek_orgTR}, \ref{diabetes_comparek_synTR}, \ref{adults_comparek_synTR} and \ref{phishing_comparek_synTR}, small $k$ does not always result in the best performance. For instance, Fig. \ref{adults_comparek_orgTR} (b), Fig. \ref{diabetes_comparek_synTR} (b) and \ref{phishing_comparek_synTR} (a), (b) show that a moderate k achieve the best performance. Fig. \ref{phishing_comparek_orgTR} and Fig. \ref{phishing_comparek_synTR} show that using a large k achieve the best performance. The local optimal k varies on different classification tasks and datasets. As such, it is hard to determine a perfect $k$ that covers all use cases. Finding a local optimal k helps MC-GEN capture the statistical representation of the dataset (clusters) in a good manner. Hence, when the classification task and datasets are assigned, it is worth investigating the local optimal k value for the dataset to ensure performance.
\end{enumerate}

\begin{table*}[t!]
\begin{center}
\begin{tabularx}{0.9\textwidth} { |>{\centering\arraybackslash}X || >{\centering\arraybackslash}X 
  | }
 \hline
 Component& Time Complexity \\
 \hline
 Feature level Clustering & $O(d^{2}log_{d})$\\
 \hline
 Sample level Clustering & $O(n^2)$ \\
 \hline
 Privacy Sanitizer and Generative Model & $O(mn)$\\
 \hline
\end{tabularx}
\caption{Time complexity of MC-GEN}
\label{Time complexity}
\end{center}
\end{table*}

\subsubsection{Comparing with Existing Methods}
In this section, we compared MC-GEN with three existing private synthetic data release methods:  
\begin{itemize}
    \item PrivBayes~\cite{10.1145/2588555.2588573}~\cite{zhang2017privbayes}: PrivBayes is designed on the Bayesian network with differential privacy. Starting from a randomly selected feature node, it extends the network iteratively by selecting a new feature node from the parent set using the exponential mechanism. Then it applied the Laplace mechanism on the conditional probability to achieve the private Bayesian network. The synthetic datasets are generated by using the perturbated Bayesian network. PrivBayes~\cite{10.1145/2588555.2588573}~\cite{zhang2017privbayes} are evaluated on the code provided by the paper authors.
    \item RonGauss~\cite{chanyaswad2017coupling}: RonGauss~\cite{chanyaswad2017coupling} releases the synthetic data based on Gaussian mixture model. It builds the Gaussian mixture model on the projected data in a lower dimension and generates the synthetic dataset based on the Gaussian mixture model with differential privacy noise. RonGauss~\cite{chanyaswad2017coupling} are evaluated on the code provided by the paper authors.
    \item NoIFS: NOIFS follows the same procedure (Fig. \ref{flowchart}) as MC-GEN to generate the synthetic data. The difference between NOIFS and MC-GEN is that NOIFS only applies the sample level clustering (MDAV) in the data prepossessing. Comparing it with MC-GEN helps us to evaluate the effectiveness of feature level clustering.
\end{itemize}

The performance of MC-GEN is evaluated under the local optimal parameter combination. For other methods, we follow the exact settings mentioned in the corresponding paper. Each method is evaluated on an average of 100 experiments. Fig. \ref{diabetes_compareMethod_orgTR}, \ref{adults_compareMethod_orgTR}, \ref{phishing_compareMethod_orgTR}, \ref{diabetes_compareMethod_synTR}, \ref{adults_compareMethod_synTR} and \ref{phishing_compareMethod_synTR} show the results of different methods under the same privacy budgets. We can observe that MC-GEN outperforms the other methods in most cases. Besides, there are several findings during the comparison:

\begin{itemize}
    \item MC-GEN always outperforms NOIFS due to the reduction in noise variance by feature level clustering (proof in section \ref{Statistic Extraction and Differentially Private Sanitizer}). 
    \item  In Fig. \ref{adults_compareMethod_orgTR}, PrivBayes has similar performance as MC-GEN, even outperform on SVM. However, in Fig. \ref{adults_compareMethod_synTR}, MC-GEN outperforms PrivBayes consistently. It may cause by the number of seed data and data types in the datasets. The seed data in scenario 1 is much less than in scenario 2. Limited seed data may affect the performance of MC-GEN. On the other hand, most of the features in adult datasets are categorical features. The Bayesian network can be more effective than the multivariate Gaussian model on categorical data. Therefore, the utility of MC-GEN can be diminished on the small categorical dataset.  
\end{itemize}

\subsection{Time complexity of MC-GEN}
The time complexity of MC-GEN is discussed in this section. In MC-GEN, there are 4 major components: feature level clustering, sample level clustering, privacy sanitizer, and generative model. Assuming the original data set has $n$ samples and $d$ features, feature level clustering vertically divided the dataset into $m$ IFSs. Then, each IFS with corresponding data is horizontally clustered into $j$ clusters. At this point, the original dataset is separated into $m \times j$ clusters and each cluster has $k$ samples. In the end, the privacy sanitizer and generative model are applied simultaneously on each cluster. Since privacy sanitizer and generative model take effect at the same time, we consider it as the same part while discussing the time complexity. Table~\ref{Time complexity} illustrates the detailed time complexity of each component. Feature level clustering mainly relies on agglomerative hierarchical clustering which has time complexity $O(d^{2}log_{d})$. The time complexity of sample level clustering can be found in~\cite{oganian2001complexity}. For privacy sanitizer and generative model, it applies the noise on data in each cluster, so the time complexity should be $O(mjk)$. Since $n=j \times k$, the time complexity can be rewritten as $O(mn)$. All the components are sequentially applied, the total time complexity of MC-GEN is $O(d^{2}log_{d} + n^2 + mn)$.

\section{Conclusion}
\label{sec:Conclusion}
We proposed a novel and effective synthetic data generation method, MC-GEN, which targets on generating a private synthetic dataset for data sharing. We demonstrate MC-GEN improves the utility of synthetic datasets by using multi-level clustering. In the experimental evaluation, we show the effectiveness of synthetic datasets generated by MC-GEN and investigate the parameter effect of MC-GEN. With the best parameter settings, synthetic datasets generated by MC-GEN can achieve similar performance as original datasets. Moreover, we compared MC-GEN with three existing methods. The experimental results show that MC-GEN outperforms the other existing methods in terms of utility. In the future, we will apply and enhance the proposed method on different data types and different machine learning tasks. Meanwhile, as the complexity of data increases, a more powerful and precise model is worth exploring.

\bibliography{elsarticle-template.bbl}

\begin{thebibliography}{10}
\expandafter\ifx\csname url\endcsname\relax
  \def\url#1{\texttt{#1}}\fi
\expandafter\ifx\csname urlprefix\endcsname\relax\def\urlprefix{URL }\fi
\expandafter\ifx\csname href\endcsname\relax
  \def\href#1#2{#2} \def\path#1{#1}\fi

\bibitem{talavera2022data}
E.~Talavera, G.~Iglesias, {\'A}.~Gonz{\'a}lez-Prieto, A.~Mozo,
  S.~G{\'o}mez-Canaval, Data augmentation techniques in time series domain: A
  survey and taxonomy, arXiv preprint arXiv:2206.13508 (2022).

\bibitem{dwork2008differential}
C.~Dwork, Differential privacy: A survey of results, in: International
  Conference on Theory and Applications of Models of Computation, Springer,
  2008, pp. 1--19.

\bibitem{hardt2012simple}
M.~Hardt, K.~Ligett, F.~McSherry, A simple and practical algorithm for
  differentially private data release, in: Advances in Neural Information
  Processing Systems, 2012, pp. 2339--2347.

\bibitem{dwork2011differential}
C.~Dwork, Differential privacy, Encyclopedia of Cryptography and Security
  (2011) 338--340.

\bibitem{taneja2008diffgen}
K.~Taneja, T.~Xie, Diffgen: Automated regression unit-test generation, in:
  Proceedings of the 2008 23rd IEEE/ACM International Conference on Automated
  Software Engineering, IEEE Computer Society, 2008, pp. 407--410.

\bibitem{su2018privpfc}
D.~Su, J.~Cao, N.~Li, M.~Lyu, Privpfc: differentially private data publication
  for classification, The VLDB JournalThe International Journal on Very Large
  Data Bases 27~(2) (2018) 201--223.

\bibitem{bindschaedler2017plausible}
V.~Bindschaedler, R.~Shokri, C.~A. Gunter, Plausible deniability for
  privacy-preserving data synthesis, Proceedings of the VLDB Endowment 10~(5)
  (2017) 481--492.

\bibitem{zhang2017privbayes}
J.~Zhang, G.~Cormode, C.~M. Procopiuc, D.~Srivastava, X.~Xiao, Privbayes:
  Private data release via bayesian networks, ACM Transactions on Database
  Systems (TODS) 42~(4) (2017) 25.

\bibitem{chanyaswad2017coupling}
T.~Chanyaswad, C.~Liu, P.~Mittal, Coupling dimensionality reduction with
  generative model for non-interactive private data release, arXiv preprint
  arXiv:1709.00054 (2017).

\bibitem{soria2017differentially}
J.~Soria-Comas, J.~Domingo-Ferrer, Differentially private data sets based on
  microaggregation and record perturbation, in: Modeling Decisions for
  Artificial Intelligence, Springer, 2017, pp. 119--131.

\bibitem{hall1999correlation}
M.~A. Hall, Correlation-based feature selection for machine learning (1999).

\bibitem{domingo2005ordinal}
J.~Domingo-Ferrer, V.~Torra, Ordinal, continuous and heterogeneous k-anonymity
  through microaggregation, Data Mining and Knowledge Discovery 11~(2) (2005)
  195--212.

\bibitem{sweeney2002k}
L.~Sweeney, k-anonymity: A model for protecting privacy, International Journal
  of Uncertainty, Fuzziness and Knowledge-Based Systems 10~(05) (2002)
  557--570.

\bibitem{https://doi.org/10.48550/arxiv.1802.06739}
L.~Xie, K.~Lin, S.~Wang, F.~Wang, J.~Zhou,
  \href{https://arxiv.org/abs/1802.06739}{Differentially private generative
  adversarial network} (2018).
\newblock \href {https://doi.org/10.48550/ARXIV.1802.06739}
  {\path{doi:10.48550/ARXIV.1802.06739}}.
\newline\urlprefix\url{https://arxiv.org/abs/1802.06739}

\bibitem{9522203}
S.~Imtiaz, M.~Arsalan, V.~Vlassov, R.~Sadre, Synthetic and private smart health
  care data generation using gans, in: 2021 International Conference on
  Computer Communications and Networks (ICCCN), 2021, pp. 1--7.
\newblock \href {https://doi.org/10.1109/ICCCN52240.2021.9522203}
  {\path{doi:10.1109/ICCCN52240.2021.9522203}}.

\bibitem{yoon2018pategan}
J.~Yoon, J.~Jordon, M.~van~der Schaar,
  \href{https://openreview.net/forum?id=S1zk9iRqF7}{{PATE}-{GAN}: Generating
  synthetic data with differential privacy guarantees}, in: International
  Conference on Learning Representations, 2019.
\newline\urlprefix\url{https://openreview.net/forum?id=S1zk9iRqF7}

\bibitem{goodfellow2014generative}
I.~Goodfellow, J.~Pouget-Abadie, M.~Mirza, B.~Xu, D.~Warde-Farley, S.~Ozair,
  A.~Courville, Y.~Bengio, Generative adversarial nets, Advances in neural
  information processing systems 27 (2014).

\bibitem{abs-2112-09238}
Y.~Tao, R.~McKenna, M.~Hay, A.~Machanavajjhala, G.~Miklau,
  \href{https://arxiv.org/abs/2112.09238}{Benchmarking differentially private
  synthetic data generation algorithms}, CoRR abs/2112.09238 (2021).
\newblock \href {http://arxiv.org/abs/2112.09238} {\path{arXiv:2112.09238}}.
\newline\urlprefix\url{https://arxiv.org/abs/2112.09238}

\bibitem{dwork2006calibrating}
C.~Dwork, F.~McSherry, K.~Nissim, A.~Smith, Calibrating noise to sensitivity in
  private data analysis, in: Theory of cryptography conference, Springer, 2006,
  pp. 265--284.

\bibitem{johnson1967hierarchical}
S.~C. Johnson, Hierarchical clustering schemes, Psychometrika 32~(3) (1967)
  241--254.

\bibitem{ahrendt2005multivariate}
P.~Ahrendt, The multivariate gaussian probability distribution, Technical
  University of Denmark, Tech. Rep (2005) 203.

\bibitem{davies1979cluster}
D.~L. Davies, D.~W. Bouldin, A cluster separation measure, IEEE transactions on
  pattern analysis and machine intelligence~(2) (1979) 224--227.

\bibitem{nr}
R.~A. Rossi, N.~K. Ahmed, \href{https://networkrepository.com}{The network data
  repository with interactive graph analytics and visualization}, in: AAAI,
  2015.
\newline\urlprefix\url{https://networkrepository.com}

\bibitem{misc_adult_2}
{Adult}, UCI Machine Learning Repository (1996).

\bibitem{mohammad2012assessment}
R.~M. Mohammad, F.~Thabtah, L.~McCluskey, An assessment of features related to
  phishing websites using an automated technique, in: 2012 international
  conference for internet technology and secured transactions, IEEE, 2012, pp.
  492--497.

\bibitem{scikit-learn}
F.~Pedregosa, G.~Varoquaux, A.~Gramfort, V.~Michel, B.~Thirion, O.~Grisel,
  M.~Blondel, P.~Prettenhofer, R.~Weiss, V.~Dubourg, J.~Vanderplas, A.~Passos,
  D.~Cournapeau, M.~Brucher, M.~Perrot, E.~Duchesnay, Scikit-learn: Machine
  learning in {P}ython, Journal of Machine Learning Research 12 (2011)
  2825--2830.

\bibitem{sklearn_api}
L.~Buitinck, G.~Louppe, M.~Blondel, F.~Pedregosa, A.~Mueller, O.~Grisel,
  V.~Niculae, P.~Prettenhofer, A.~Gramfort, J.~Grobler, R.~Layton,
  J.~VanderPlas, A.~Joly, B.~Holt, G.~Varoquaux, {API} design for machine
  learning software: experiences from the scikit-learn project, in: ECML PKDD
  Workshop: Languages for Data Mining and Machine Learning, 2013, pp. 108--122.

\bibitem{Dua:2019}
D.~Dua, C.~Graff, \href{http://archive.ics.uci.edu/ml}{{UCI} machine learning
  repository} (2017).
\newline\urlprefix\url{http://archive.ics.uci.edu/ml}

\bibitem{chang2011libsvm}
C.-C. Chang, C.-J. Lin, Libsvm: a library for support vector machines, ACM
  transactions on intelligent systems and technology (TIST) 2~(3) (2011) 27.

\bibitem{10.1145/2588555.2588573}
J.~Zhang, G.~Cormode, C.~M. Procopiuc, D.~Srivastava, X.~Xiao,
  \href{https://doi.org/10.1145/2588555.2588573}{Privbayes: Private data
  release via bayesian networks}, SIGMOD '14, Association for Computing
  Machinery, New York, NY, USA, 2014.
\newblock \href {https://doi.org/10.1145/2588555.2588573}
  {\path{doi:10.1145/2588555.2588573}}.
\newline\urlprefix\url{https://doi.org/10.1145/2588555.2588573}

\bibitem{oganian2001complexity}
A.~Oganian, J.~Domingo-Ferrer, On the complexity of optimal microaggregation
  for statistical disclosure control, Statistical Journal of the United Nations
  Economic Commission for Europe 18~(4) (2001) 345--353.

\end{thebibliography}

\end{document}